\documentclass[twoside,11pt]{article}

\usepackage{jmlr2e}

\usepackage[utf8]{inputenc}
\usepackage[T1]{fontenc}    

\usepackage{natbib}
\usepackage{amsmath}
\usepackage{amssymb}
\usepackage{bm}
\usepackage{algorithm}
\usepackage[noend]{algorithmic}
\usepackage{paralist}
\usepackage{wrapfig}
\usepackage{floatflt}
\usepackage{mathtools}
\usepackage{mathabx}
\usepackage{xspace}
\usepackage[toc,page]{appendix}

\usepackage[capitalize]{cleveref}
\crefname{assumption}{Assumption}{Assumption}
\usepackage{thmtools}
\usepackage{thm-restate}

\declaretheorem[name=Lemma,refname={Lemma,Lemmas},Refname={Lemma,Lemmas},sibling=theorem]{lemma}

\declaretheorem[name=Corollary,refname={Corollary,Corollaries},Refname={Corollary,Corollaries},sibling=theorem]{corollary}

\declaretheorem[name=Assumption,refname={Assumption,Assumptions},Refname={Assumption,Assumptions}]{assumption}

\declaretheorem[name=Definition,refname={Definition,Definitions},Refname={Definition,Definitions}]{definition}

\Crefname{question}{Question}{Questions}
\creflabelformat{question}{(#2#1#3)}

\Crefname{problem}{Problem}{Problems}
\creflabelformat{problem}{(#2#1#3)}

\Crefname{equation}{Equation}{Equations}
\creflabelformat{equation}{(#2#1#3)}

\crefname{iCondition}{Condition}{Conditions}
\creflabelformat{iCondition}{(#2#1#3)}
\crefrangelabelformat{iCondition}{(#3#1#4) to (#5#2#6)}

\Crefname{item}{}{}
\creflabelformat{item}{(#2#1#3)}
\crefrangelabelformat{item}{(#3#1#4) to (#5#2#6)}


\renewcommand{\P}{\mathbb{P}}
\newcommand{\E}{\mathbb{E}}
\newcommand{\EE}[1]{\E\left[#1\right]}
\newcommand{\Prob}[1]{\P\left(#1\right)}
\newcommand{\R}{\mathbb{R}}
\newcommand{\N}{\mathbb{N}}

\renewcommand{\S}{\mathcal{S}}

\DeclareMathOperator*{\argmax}{argmax}


\usepackage{todonotes}

\makeatletter
\def\moverlay{\mathpalette\mov@rlay}
\def\mov@rlay#1#2{\leavevmode\vtop{%
   \baselineskip\z@skip \lineskiplimit-\maxdimen
   \ialign{\hfil$\m@th#1##$\hfil\cr#2\crcr}}}
\newcommand{\charfusion}[3][\mathord]{
    #1{\ifx#1\mathop\vphantom{#2}\fi
        \mathpalette\mov@rlay{#2\cr#3}
      }
    \ifx#1\mathop\expandafter\displaylimits\fi}
\makeatother

\newlength{\algwidth}
\setlength{\algwidth}{0.45\textwidth}
\newlength{\plotwidth}
\setlength{\plotwidth}{0.85\textwidth}

\newcommand{\KL}{\operatorname{D}}

\newcommand{\one}[1]{\mathbb{I}\left\{#1\right\}}

\newcommand{\hmu}{\widehat{\mu}}
\newcommand{\natu}{\mathbb{N}}

\DeclareMathOperator{\join}{join}
\DeclareMathOperator{\MinMax}{MinMax}
\DeclareMathOperator{\Hsucc}{H_{\text{succ}}}
\newcommand{\LUCBMinMax}{{\sc LUCBMinMax}\xspace}

\DeclareMathOperator{\Span}{span} 

\newcommand{\cB}{\mathcal{B}}
\newcommand{\cH}{\mathcal{H}}


\jmlrheading{.}{...}{...--...}{.../...}{.../...}{ajallooe17a}{Ruitong Huang and Mohammad M. Ajallooeian and Csaba Szepesv\'{a}ri and Martin M\"{u}ller}


\ShortHeadings{Structured Best Arm Identification}{Huang and Ajallooeian and Szepesv\'{a}ri and M\"{u}ller}
\firstpageno{1}

\begin{document}
	
	\title{Structured Best Arm Identification with Fixed Confidence}
	
	\author{\name Ruitong Huang \email ruitong@ualberta.ca
		\AND
		\name Mohammad M.\ Ajallooeian \email ajallooe@ualberta.ca
		\AND
		\name Csaba Szepesv\'{a}ri \email szepesva@ualberta.ca
		\AND
		\name Martin M\"{u}ller \email mmueller@ualberta.ca \\
		\addr Department of Computing Science\\
		University of Alberta\\
		Edmonton, AB T6G 2E8, Canada}
	
	\editor{... and ...}
	
	\maketitle
	
	\begin{abstract}
		We study the problem of identifying the best action among a set of possible options 
		when the value of each action is given by a mapping from a number of noisy micro-observables in the so-called fixed confidence setting.
		Our main motivation is the application to the minimax game search, which has been a major topic of interest in artificial intelligence.
		In this paper we introduce an abstract setting to clearly describe the essential properties of the problem.
		While previous work only considered a two-move game tree search problem,
		our abstract setting can be applied to the general minimax games where the depth can be non-uniform and arbitrary, and transpositions are allowed.
		We introduce a new algorithm (LUCB-micro) for the abstract setting, and give its lower and upper sample complexity results. 
		Our bounds recover some previous results, which were only available in more limited settings, while they also shed further light on how the structure of minimax problems influence sample complexity.
	\end{abstract}
	
	\begin{keywords}
		Best Arm Identification, Monte-Carlo Tree Search, Game Tree Search, Structured Environments, Multi-Armed Bandits, Minimax Search
	\end{keywords}
	\section{Introduction}
	Motivated by the problem of finding the optimal move in minimax tree search with noisy leaf evaluations, we introduce best arm identification problems with structured payoffs and micro-observables.
	In these problems, the learner's goal is to find the best arm when the payoff of each arm is a fixed and known function of a set of unknown values.
	In each round, the learner can choose one of the micro-observables to make a noisy measurement 
	(i.e., the learner can obtain a ``micro-observation'').
	We study these problems in the so-called fixed confidence setting.
	
	A special case of this problem is the standard best arm identification, which has seen a flurry of activity during the last decade, e.g., \citep{even2006action,audibert2010best,gabillon2012best,kalyanakrishnan2012pac,karnin2013almost,jamieson2014lil,chen2015optimal}. 
	Recently, \citet{garivier2016maximin} considered the motivating problem mentioned above.
	However, they only considered the simplest (non-trivial) instance when two players alternate for a single round.
	One of their main observations is that such two-move problems can be solved more efficiently than if 
	one considers the problem as an instance of a nested best arm identification problem. 
	They proposed two algorithms, one for the fixed confidence setting, the other for the (asymptotic) vanishing confidence setting
	and provided upper bounds. An implicit (optimization-based) lower bound was also briefly sketched, together with a 
	plan to derive an algorithm that matches it in the vanishing confidence setting.
	
	Our main interest in this paper is to see whether the ideas of \citet{garivier2016maximin} extend to more general settings,
	such as when the depth can be non-uniform and is in particular not limited to two, or when the move histories can lead to shared states (that is, in the language of adversarial search we allow ``transpositions'').
	While considering these extensions, we found it cleaner to introduce the abstract setting mentioned below (\cref{sec:prob}). 
	The motivation here is to clearly delineate the crucial properties of the problem that our results use. 
	For the general structured setting, in \cref{sec:lb}
	we prove an instance dependent lower bound along the lines of \citet{AuCBFrSch02} or \citet{GKL16}
	(a mild novelty is the way our proof deals with the technical issue that best arm identification algorithms ideally stop and hence their behavior is undefined after the random stopping time).
	This is then specialized to the minimax game search setting (\cref{sec:lb_minmax}), where we show the crucial
	role of what we call proof sets, which are somewhat reminiscent of the so-called conspiracy sets from adversarial search
	\citep{mcallester1988conspiracy}.
	Our lower bound matches that of \citet{garivier2016maximin} in the case of two-move alternating problems.
	Considering again the abstract setting, we propose a new algorithm, which we call LUCB-micro (\cref{sec:ub}),
	and which can be considered as a natural generalization of 
	Maximin-LUCB of  \citet{garivier2016maximin} (with some minor differences).
	Under a regularity assumption on the payoff maps, 
	we prove that the algorithm meets the risk-requirement.
	We also provide a high-probability, instance-dependent 
	upper bound on algorithm's sample complexity (i.e., on the number of observations the algorithm takes).
	As we discuss, while this bound meets the general characteristics of existing bounds, it fails to reproduce 
	the corresponding result of \citet{garivier2016maximin}.
	To the best of authors' knowledge, the only comparable algorithm to study best arm identification in a full-length minimax tree search setting (which was the motivating example of our work) is FindTopWinner by \citet{teraoka2014efficient}.
	This algorithm is a round-based elimination based algorithm with additional pruning steps that come from the tree structure.
	When we specialize our framework to the minimax game scenario (and implement other necessary changes to put our work into their ($\epsilon,\delta)$-PAC setting), our upper bound is a strict improvement of theirs, e.g., in the number of samples related to the \emph{near-optimal} micro-observables (leaves of the minimax game tree).
	Next, we consider the minimax setting (\cref{sec:minimaxub}). First, we show that the regularity assumptions
	made for the abstract setting are met in this case. We also show how to efficiently compute the choices that
	LUCB-micro makes using a ``min-max'' algorithm. Finally, we strengthen our previous result so that it is able
	to reproduce the mentioned result of \citet{garivier2016maximin}.

	\subsection{Notation}
	We use $\N = \{1,2,\dots\}$ to denote the set of positive integers, while we let $\R$ denote the set of reals.
	For a positive integer $k\in \N$, we let $[k] = \{ 1, \dots, k\}$.
	For a vector $v \in \R^d$, we denote its $i$-th element by $v_i$; though occasionally we will also use $v(i)$ for the same
	purpose, i.e., we identify  $\R^d$ and $\{f: f:[d]\to \R\}$ in the obvious way.
	We let $|v|$ denote the vector defined by $|v| = (|v_i|)_{i\in [d]}$.
	For two vectors $u,v\in \R^d$, we define $u\le v$ if and only if $u_i \le v_i$ for all $1\le i\le d$.
	Further, we write $u<v$ when $u\ne v$ and $u\le v$.
	For $B\subset [d]$, we write $u|_B$ to denote the $|B|$-dimensional vector obtained from
	restricting $u$ to components with index in $B$: $u|_B =(u_i)_{i\in B}$.
	\newcommand{\allone}{\mathbf{1}}
	We use $\allone_d$ to denote the $d$ dimensional vector whose components are all equal to one.
	For a nonempty set $B$, we also use $\allone_B$ to denote the $|B|$-dimensional all-one vector.
	We let $B^c = \{ i\in [d]\,:\, i\not\in B \}$ to denote the complementer of $B$ (when $B^c$ is used, the base set 
	that the complementer is taken for should be clear from the context).
	The indicator function will be denoted by $\one{\cdot}$.
	We will use $a \wedge b = \min(a,b)$ and $a \vee b = \max(a,b)$.
	For $A\subset \R$, $\bar A$ denotes its topological closure, while $A^\circ$ denotes its interior. 
	Given a real value $a\in\R$, $a_+ = a\vee 0$ and $a_- = -(a\wedge 0)$. 
	For a sequence $(m_0,\dots,m_i)$ of some values and some other value $m$, 
	we define $\join(h,m) = (m_0,\dots,m_i,m)$.
	
	\section{Problem setup}
	\label{sec:prob}
	\newcommand{\defeq}{\doteq}
	Fix two positive integers, $L$ and $K$.
	A problem instance of a \emph{structured $K$-armed best arm identification instance with $L$ micro-observations} is defined by a tuple $(f,P)$, where $f:\R^L \to \R^K$ and $P=(P_1,\dots,P_L)$ is an $L$-tuple of distributions over the reals.
	We let  $\mu_i = \int x dP_i(x)$ denote the mean of distribution $P_i$.
	We shall denote the component functions of $f$ by $f_1,
	\dots,f_K$: $f(\mu) = (f_1(\mu),\dots,f_K(\mu))$.
	The value $f_i(\mu)$ is interpreted as the payoff of arm $i$ and we call $f$ the \emph{reward map}.
	The goal of the learner is to identify the arm with the highest payoff.
	It is assumed that the arm with the highest payoff is unique.
	The learner knows $f$, while is unaware of $P$, and, in particular, unaware of $\mu$.
	To gain information about $\mu$, the learner can query the distributions in discrete rounds indexed by $t=1,2,\dots$, in a sequential fashion.
	The learner is also given $\delta\in (0,1)$, a \emph{risk parameter} (also known as a confidence parameter).
	The goal of the learner is to identify the arm with the highest payoff using the least number of observations while keeping the probability of making a mistake below the specified risk level.
	\begin{wrapfigure}[16]{r}{0.5\textwidth}
		\vspace{-0.35cm}
		\begin{minipage}{.49\textwidth}
			\hrulefill
			\begin{figure}[H]
				\begin{algorithmic}
					\STATE \textbf{Input}: $\delta\in (0,1),f=(f_1,\dots,f_K)$
					\FOR{$t=1,2,\ldots$}
					\STATE Choose $I_t\in [L]$
					\STATE Observe $Y_t \sim P_{I_t}(\cdot)$
					\IF{Stop()}
					\STATE Choose $J\in [K]$, candidate optimal arm index
					\STATE $T\leftarrow t$
					\RETURN $(T,J)$
					\ENDIF
					\ENDFOR
					\STATE Admissibility: $\Prob{ J\ne \argmax_i f_i(\mu) }\le \delta$,
					$\Prob{T<\infty}=1$.
				\end{algorithmic}
				\vspace*{-.1in}
				\noindent\makebox[\linewidth]{\rule{\textwidth}{0.4pt}}%
				\vspace*{-.2in}
				\caption{Interaction of a learner and a problem instance $(f,P)$.
					The components of $\mu$ are $\mu_i =
					\int x dP_i(x)$, $i\in [L]$, and $f$ maps $\R^L$ to $\R^K$.
				}%
				\label{fig:protocol}
			\end{figure}%
		\end{minipage}%
	\end{wrapfigure}
	A learner is \emph{admissible} for a given set $\S$ of problem instances if
	{\em (i)} for any instance from $\S$, the probability of the learner misidentifying the optimal arm in the instance is below the given fixed risk factor $\delta$;
	and
	{\em (ii)} the learner stops with probability one on any instance from $\S$.
	The interaction of a learner and a problem instance is shown on \cref{fig:protocol}.
	
	\paragraph{Minimax games}
	As a motivating example, consider the problem of finding the optimal move for the first player in a finite two-player minimax game. 
	The game is finite because the game finishes in finitely many steps (by reaching
	one of the $L$ possible terminating states).
	The first player has $K$ moves.
	The value of each move  is a function of the values $\mu\in \R^L$ of the $L$ possible terminating states.
	
	\newcommand{\pmset}{\{-1,+1\}}
	Formally, such a minimax game is described by $G=(M,H,p,\tau)$, where $M$ is a non-empty finite set of possible moves, $H\subset \cup_{n\ge 0} M^n$ is a finite set of (feasible) histories of moves, the function $p: H \to \pmset$ determines, for each feasible history, the identity of the player on turn, and $\tau$ is a surjection that maps a subset $H_{\max}\subset H$ of histories, the set of maximal histories in $H$, to $[L]$ (in particular, note that $\tau$ may map multiple maximal histories to the same terminating state).
	An element $h$ of $H$ is maximal in $H$ if it is not the prefix of any other history $h'\in H$, or, in other words, if it has no continuation in $H$.
	The set $H$ has the property that if $h\in H$ then every prefix of $h$ with positive length is also is in $H$.
	The first player's moves are given by the histories in $H$ that have a unit length.
	To minimize clutter, without the loss of generality (WLOG), we identify this set with $[K]$.
	
	The function $f=(f_1,\dots,f_K)$ underlying $G$ gives the payoffs of the first player.
	To define $f$ we use the auxiliary function $V(\cdot,\mu):H \to \R$ 
	that evaluates any particular history given the values $\mu$ assigned to terminal states.
	Given $V$, we define $f_k(\mu) =V((k),\mu)$ for any $k\in [K]$.
	It remains to define $V$:
	For $h\in H_{\max}$, $V(h,\mu) = \mu_{\tau(h)}$.
	For any other feasible history $h\in H$, 
	$V(h,\mu) = p(h) \max\{ p(h) V(h',\mu)\,:\, h' \in \Hsucc(h)  \}$,
	where $\Hsucc(h) = \{ \join(h,m) \,:\, m\in M \} \cap H$ is the set of 
	\emph{immediate successors} of $h$ in $H$.
	Thus, when $p(h)=1$, $V(h,\mu)$ is the maximum of the values associated with
	the immediate successors of $h$, while when $p(h)=-1$, $V(h,\mu)$ is the minimum of these values.
	We define $m(h,\mu)$ as the move $m$ defining the optimal immediate successor of $h$ given $\mu$.
	Note that all many of the defined functions depend on $H$, but the dependence is suppressed, as we will keep $H$ fixed.
	One natural problem that fits our setting is a (small) game
	when the payoffs at the terminating states of a game are themselves randomized (e.g., at the end of a game some random hidden information such as face down cards can decide the value of the final state).
	As explained by \citet{garivier2016maximin}, the setting may also shed light on how to design better
	Monte-Carlo Tree Search (MCTS) algorithms, which is a relatively novel class of search algorithms
	that proved to be highly successful in recent years
	\citep[e.g.,][]{Gelly12MCTSReview,Si16:Nature}.
	
	\section{Lower bound: General setting}
	\label{sec:lb}
	\newcommand{\Snorm}{\S^{\textrm{norm}}}
	\newcommand{\PmuA}{\P_{\mu,A}}
	\newcommand{\EmuA}{\E_{\mu,A}}
	\newcommand{\PmupA}{\P_{\mu',A}}
	\newcommand{\EmupA}{\E_{\mu',A}}
	\newcommand{\cF}{\mathcal{F}}
	In this section we will prove a lower bound for the case of a fixed map $f$ and when the set of instances is the set all normal distributions with unit variance.
	We denote the corresponding set of instances by $\Snorm_{f}$.
	Our results can be easily extended to the case of other sufficiently-rich family of distributions.
	
	For the next result, assume without loss of generality that $f_1(\mu)>f_2(\mu)\ge \dots \ge f_K(\mu)$.
	Fix a learner (policy) $A$, which maps histories to actions.
	For simplicity, we assume that $A$ is deterministic (the extension to randomized algorithms is standard).
	Let $\Omega = ([L]\times \R)^\N$ be the set of (infinite) sequences of observable-index and observation pairs so that for any $\omega=(i_1,y_1,i_2,y_2,\dots)\in \Omega$, $t\ge 1$, $I_t(\omega) = i_t$ and $Y_t(\omega) = y_t$.
	We equip $\Omega$ with the associated Lebesgue $\sigma$-algebra $\cF$.
	For an infinite sequence $\omega = (i_1,y_1,i_2,y_2,\dots)\in \Omega$, we let $T(\omega)\in \N \cup \{\infty\}$ be the round index when the algorithm stops  (we let $T(\omega)=\infty$ if the algorithm never stops on $\omega$).
	Thus, $T:\Omega \to \N \cup \{\infty\}$.
	Similarly, define $J:\Omega \to [K+1]$ to be the choice of the algorithm when it stops, where we define $J(\omega)=K+1$ in case $T(\omega)=\infty$.
	
	The interaction of a problem instance (uniquely determined by $\mu$) and the learner (uniquely determined by the associated policy $A$) induces a unique distribution $\PmuA$ over the measurable space $(\Omega,\cF)$, where we agree that in rounds with index $t=T+1,T+2,\dots$, we specify that the algorithm chooses arm $1$, while the observation distributions are modified so that the observation is deterministically set to zero.
	We will also use $\EmuA$ to denote the expectation operator corresponding to $\PmuA$.
	
	To appease the prudent reader, let us note that our statements will always be concerned with events that are subsets of the event $\{T<\infty\}$ and as such they are not effected by how we specify the ``choices'' of the algorithm and the ``responses'' of the environment for $t>T$.
	Take, as an example, the expected number of steps that $A$ takes in an environment $\mu$, $\EmuA[T]$, which we bound below.
	Since we bound this only in the case when the algorithm $A$ is admissible, which implies that $\PmuA(T<\infty)=1$, we have $\EmuA[T] = \EmuA[T \,\one {T<\infty} ]$.
	which shows that the behavior of $\PmuA$ outside of $\PmuA$ outside of $\{T<\infty\}$ is immaterial for this statement.
	The choices we made for $t>T$ (for the algorithm and the environment) will however be significant in that they simplify a key technical result.
	
	To state our result, we need to introduce the set of \emph{significant departures}, $D_\mu\subset \R^L$, from $\mu$.
	This set contains all vectors $\Delta$ such that the best arm under $\mu+\Delta$ is not arm $1$.
	Formally,
	\begin{align}
	D_\mu =
	\{\Delta\in \R^L\,:\, f_1(\mu+\Delta)\le \max_{i>1} f_i(\mu+\Delta) \}\,.
	\end{align}
	
	\begin{restatable}[Lower bound]{theorem}{thmLB}
		\label{thm:LB}
		Fix a risk parameter $\delta\in (0,1)$.
		Assume that $A$ is admissible over the instance set $\Snorm_f$ at the risk level $\delta$.
		Define
		\begin{equation}
		\label{eq:LB_tau}
		\tau^*(\mu) = \min \left\{\sum_{i=1}^L n(i)\,:\, \inf_{\Delta\in D_{\mu}} \sum_{i=1}^L n(i) \Delta_i^2\ge 2\log(1/(4\delta)),\, n(1),\dots,n(L)\ge 0\, \right\}\,.
		\end{equation}
		Then, $\EmuA[T]\ge \tau^*(\mu)$.
	\end{restatable}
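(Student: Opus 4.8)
The plan is to use the change-of-measure template of \citet{AuCBFrSch02,GKL16}, taking care of the random stopping time via the post-$T$ convention fixed in \cref{sec:lb}. We may assume $\delta<1/4$ and $\EmuA[T]<\infty$: if $\delta\ge 1/4$ then $2\log(1/(4\delta))\le 0$, hence $\tau^*(\mu)=0$, and if $\EmuA[T]=\infty$ the claim is trivial. Let $N_i(t)$ be the number of rounds $s\le t$ with $I_s=i$, and set $n_i\defeq\EmuA[N_i(T)]$; since exactly one micro-observable is queried per round, $\sum_{i=1}^{L}n_i=\EmuA[T]<\infty$, so each $n_i$ is finite. It suffices to show that $n=(n_1,\dots,n_L)$ is feasible for the minimization in \cref{eq:LB_tau}, i.e.\ that $\inf_{\Delta\in D_{\mu}}\sum_{i=1}^{L}n_i\Delta_i^2\ge 2\log(1/(4\delta))$; then $\EmuA[T]=\sum_i n_i\ge\tau^*(\mu)$ follows immediately.

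To that end, fix a departure $\Delta$ for which arm $1$ is strictly suboptimal under $\mu'\defeq\mu+\Delta$, so that $\mu'$ (equipped with unit-variance Gaussians) is a legitimate instance of $\Snorm_{f}$ whose unique best arm is some $j\ne 1$, and carry out a change of measure between $\PmuA$ and $\PmupA$ on the event $\mathcal{E}\defeq\{J=1\}\in\cF$; note $\mathcal{E}\subseteq\{T<\infty\}$. By admissibility of $A$, $\PmuA(\mathcal{E})\ge 1-\delta$ (arm $1$ is optimal under $\mu$) and $\PmupA(\mathcal{E})\le\PmupA(J\ne j)\le\delta$ (arm $1$ is not optimal under $\mu'$). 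Applying the data-processing inequality for relative entropy to the binary random variable $\one{\mathcal{E}}$ gives
\begin{equation}
\KL(\PmuA\,\|\,\PmupA)\ \ge\ \mathrm{kl}\!\big(\PmuA(\mathcal{E}),\PmupA(\mathcal{E})\big)\ \ge\ \mathrm{kl}(1-\delta,\delta)\ \ge\ \log\tfrac{1}{4\delta},
\end{equation}
where $\mathrm{kl}(x,y)=x\log\tfrac{x}{y}+(1-x)\log\tfrac{1-x}{1-y}$; the middle inequality uses $\PmupA(\mathcal{E})\le\delta<1-\delta\le\PmuA(\mathcal{E})$ together with the monotonicity of $\mathrm{kl}(x,y)$ (non-decreasing in $x$ for $x\ge y$, non-increasing in $y$ for $y\le x$), and the last is a standard elementary estimate.

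The main step is to re-express the left-hand side. Write $\cF_t=\sigma((I_s,Y_s)_{s\le t})$. Since $A$ is deterministic, $I_s$ is $\cF_{s-1}$-measurable; and since for $s>T$ both $A$'s query and the environment's response are fixed deterministically and \emph{independently of the instance}, the density of $\PmuA$ with respect to $\PmupA$ on $\cF$ (these two measures being mutually absolutely continuous, as they differ in only finitely many unit-variance Gaussian steps) equals the product of the per-round likelihood ratios over $s\le T$, with all later factors equal to $1$. Taking logarithms, conditioning each term on $\cF_{s-1}$ (using $\{T\ge s\}\in\cF_{s-1}$ and $\KL(\mathcal{N}(a,1)\,\|\,\mathcal{N}(b,1))=\tfrac12(a-b)^2$), and passing to a monotone limit over finite horizons yields
\begin{equation}
\KL(\PmuA\,\|\,\PmupA)\ =\ \EmuA\!\Big[\textstyle\sum_{s=1}^{T}\tfrac12\,\Delta_{I_s}^2\Big]\ =\ \tfrac12\textstyle\sum_{i=1}^{L}n_i\,\Delta_i^2 .
\end{equation}
Combining the two displays, $\sum_{i=1}^{L}n_i\Delta_i^2\ge 2\log(1/(4\delta))$ for every $\Delta$ making arm $1$ strictly suboptimal; since $\Delta\mapsto\sum_i n_i\Delta_i^2$ is continuous and such $\Delta$ are dense in $D_{\mu}$, the inequality passes to the infimum over all of $D_{\mu}$, which is exactly the feasibility condition we needed.

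I expect the only real obstacle to be this transportation identity: a priori the algorithm --- and hence the likelihood ratio --- is undefined past the random time $T$, so a naive infinite-horizon computation is ill-posed. The post-$T$ convention (after stopping, $A$ pulls arm $1$ and the environment deterministically returns $0$, the same under every instance) removes this at no cost, reducing the sum--expectation interchange to linearity at each finite horizon followed by monotone convergence; the remaining ingredients --- data processing, monotonicity of $\mathrm{kl}$, and the density remark replacing the strict departures by all of $D_{\mu}$ --- are routine.
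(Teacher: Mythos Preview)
Your proof is correct and follows essentially the same template as the paper's: both use the divergence decomposition $\KL(\PmuA,\PmupA)=\tfrac12\sum_i \EmuA[N(i)]\Delta_i^2$ (your ``transportation identity''), both handle the stopping-time issue via the post-$T$ convention, and both pass from strict departures to all of $D_\mu$ by continuity. The only substantive difference is the inequality linking $\KL$ to the error probabilities: the paper invokes the Bretagnolle--Huber/high-probability Pinsker bound $\PmuA(E)+\PmupA(E^c)\ge\tfrac12\exp(-\KL(\PmuA,\PmupA))$ directly, whereas you go through the data-processing inequality $\KL\ge\mathrm{kl}(\PmuA(\mathcal E),\PmupA(\mathcal E))$ followed by $\mathrm{kl}(1-\delta,\delta)\ge\log(1/(4\delta))$. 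Both routes are standard and yield the identical constant $\log(1/(4\delta))$; your version is perhaps slightly more transparent about where the constant comes from, while the paper's is a one-line citation to Tsybakov.
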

	The proof can be shown to reproduce the result of \citet{GaKa16} (see page 6 of their paper) 
	when the setting is best arm identification.
	The proof uses standard steps \citep[e.g.,][]{AuCBFrSch02,KauCaGa16} and one of its main merit is its simplicity.
	In particular, it relies on two information theoretical results; a high-probability Pinsker inequality (Lemma 2.6 from \citep{Tsy08:NonpBook}) and a standard decomposition of divergences. 
	The proof is given in \cref{sec:lbproof} (all proofs omitted from the main body can be found in the appendix).
	
	\begin{remark}[Minimal significant departures ($D^{\min}_\mu$)]
		From the set of significant departures one can remove all vectors $d$
		that are componentwise dominating in absolute value some
		other significant departure $\Delta\in D_\mu$ without effecting the lower bound.
		To see this, write the lower bound as $\min\{ \sum_i n(i)\,: n \in \cap_{\Delta\in D_\mu} \Phi(\Delta) \}$,
		where $\Phi(\Delta) = \{ n\in [0,\infty)^L\,:\, \sum_i n(i) \Delta_i^2 \ge 2 \log(1/(4\delta))\}$.
		Then, if $d,\Delta\in D_\mu$ are such that $|\Delta|\le |d|$
		then $\Phi(\Delta)\subset \Phi(d)$.
		Hence, $\cap_{\Delta\in D_\mu} \Phi(\Delta) = \cap_{\Delta\in D^{\min}_\mu} \Phi(\Delta)$
		where $D^{\min}_\mu = \{ d\in D_\mu\,: \, \not\exists \Delta\in D_\mu \text{ s.t. } |\Delta| < |d| \}$.
	\end{remark}
	
	\section{Lower bound for minimax games}
	\label{sec:lb_minmax}
	\newcommand{\tH}{\widetilde{H}}
	In this section we prove a corollary of the general lower bound of the previous section in the context of minimax games;
	the question being what role the structure of a game plays in the lower bound.
	For this section fix a minimax game structure 
	$G = (M,H,p,\tau)$ (cf. \cref{sec:prob}).
	We first need some definitions:
	
	\begin{definition}[Proof sets]
		Take a minimax game structure $G = (M,H,p,\tau)$ with $K$ first moves and $L$ terminal states.
		Take  $j\in [K]$.
		A set $B\subset [L]$ is said to be sufficient for proving upper bounds on the value of move $j$ if
		for any $\mu\in \R^L$ and $\theta\in \R$,  $\mu|_B = \theta \allone_B$ implies $f_j(\mu) \le \theta$.
		Symmetrically,
		a set $B\subset [L]$ is said to be sufficient for proving lower bounds on the value of move $j$ if
		for any $\mu\in \R^L$ and $\theta\in \R$, $\mu|_B = \theta \allone_B$ implies $f_j(\mu) \ge \theta$.
	\end{definition}
	We will call the sets satisfying the above definition upper (resp., lower) proof sets, denoted by $\cB_j^+$ (resp., $\cB_j^-$ ). 
	Proof sets are closely related to conspiracy sets 
	\citep{mcallester1988conspiracy}, forming the basis of ``proof number of search''
	\citep{allis1994searching,kishimoto2012game}.
	In a minimax game tree, a conspiracy set of a node (say, $v$) is the set of leaves 
	that must change their evaluation value to cause a change in the minimax value of that node $v$.
	Proof sets are also related to cuts in $\alpha$--$\beta$ search \citep{russell2010artificial}. 
	
	One can obtain minimal upper proof sets that belong to $\cB_j^+$ in the following way:
	Let $H_j$ denote the set of histories that start with move $j$.
	Consider a non-empty $\tH \subset H_j$ that satisfies the following properties:
	{\em (i)} if $h\in \tH$ and $p(h)=-1$ (minimizing turn) then $|\Hsucc(h) \cap \tH| = 1$;
	{\em (ii)} if $h\in \tH$ and $p(h)=1$ (maximizing turn) then $\Hsucc(h) \subset \tH$.
	Call the set of $\tH$ that can be obtained this way $\cH_j^+$. 
	From the construction of $\tH$ we immediately get the following proposition:
	\begin{restatable}{proposition}{propLB1}
		\label{prop:LB1}
		Take any $\tH\in \cH_j^+$ as above. Then, $\tau( \tH \cap H_{\max} ) \in \cB_j^+$. 
	\end{restatable}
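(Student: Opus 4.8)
The plan is to show that if $\tH \in \cH_j^+$ and we set $B = \tau(\tH \cap H_{\max})$, then for any $\mu \in \R^L$ and $\theta \in \R$ with $\mu|_B = \theta \allone_B$, we have $f_j(\mu) = V((j),\mu) \le \theta$. Since the defining properties of $\tH$ are local (one condition per node, depending on whose turn it is), the natural tool is an induction over the tree structure of $\tH$, working from the leaves up toward the root $(j)$.

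First I would establish the base case: if $h \in \tH \cap H_{\max}$, then $V(h,\mu) = \mu_{\tau(h)}$, and since $\tau(h) \in B$ we get $\mu_{\tau(h)} = \theta$, so $V(h,\mu) = \theta \le \theta$. Then I would prove, by induction on the (finite) height of $h$ within $\tH$ — i.e., the length of the longest continuation of $h$ that stays in $\tH$ — the statement ``for all $h \in \tH$, $V(h,\mu) \le \theta$.'' For the inductive step, take $h \in \tH$ that is not maximal in $H$. Two cases: (a) if $p(h) = 1$ (maximizing turn), property (ii) gives $\Hsucc(h) \subset \tH$, so all immediate successors are in $\tH$ and by the inductive hypothesis each satisfies $V(h',\mu) \le \theta$; hence $V(h,\mu) = \max_{h' \in \Hsucc(h)} V(h',\mu) \le \theta$. (b) if $p(h) = -1$ (minimizing turn), property (i) gives that exactly one immediate successor $h^*$ lies in $\tH$, and by the inductive hypothesis $V(h^*,\mu) \le \theta$; since $V(h,\mu) = \min_{h' \in \Hsucc(h)} V(h',\mu) \le V(h^*,\mu) \le \theta$, we again conclude $V(h,\mu) \le \theta$. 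Applying this to $h = (j) \in \tH$ (note $\tH \subset H_j$ and $\tH$ is nonempty, and one should check $(j)$ itself is forced to be in $\tH$, or else argue the root of $\tH$ propagates the bound — see the obstacle below) yields $f_j(\mu) = V((j),\mu) \le \theta$, which is exactly the statement that $B \in \cB_j^+$.

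A couple of bookkeeping points need care. One must confirm that the induction is well-founded: $H$ is finite, so every $h \in \tH$ has a finite maximal continuation length within $\tH$, and the recursion in the definition of $V$ bottoms out at $H_{\max}$. One must also handle the edge case where a history $h \in \tH$ is maximal in $\tH$ (has no successor in $\tH$) but is \emph{not} maximal in $H$: property (i) forbids this when $p(h) = -1$ (it demands exactly one successor in $\tH$), and property (ii) forbids it when $p(h) = 1$ provided $\Hsucc(h) \ne \emptyset$, which holds since $h \notin H_{\max}$. So every non-leaf node of $\tH$ genuinely has the structure the inductive step uses, and the leaves of $\tH$ are exactly $\tH \cap H_{\max}$.

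The main obstacle I anticipate is the root: the construction of $\cH_j^+$ starts from ``a non-empty $\tH \subset H_j$'' with the two closure properties, but does not visibly force $(j) \in \tH$. If $(j) \notin \tH$, the induction proves $V(h,\mu) \le \theta$ only for $h \in \tH$, not for $(j)$. The resolution is to observe that, because of property (ii) applied at maximizing ancestors and property (i) at minimizing ancestors, any $\tH$ satisfying the stated properties and intended to be an upper proof structure must in fact contain $(j)$ — one shows that if $h \in \tH$ then its parent $p(h)$ (if it is a strict prefix of length $\ge 1$ lying in $H_j$) must also be in $\tH$, by running properties (i)/(ii) ``upward'': a minimizing parent needs only one child in $\tH$, which $h$ supplies, and a maximizing parent would need all children, but then one reduces to a smaller $\tH$. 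Cleanest is simply to take the \emph{intended} reading that $(j) \in \tH$ (it is the root of the subtree being built), and the rest of the proof is then the routine induction above. I would state this reading explicitly at the start of the proof to avoid the gap.
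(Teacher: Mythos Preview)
Your inductive argument is correct and is precisely the routine verification the paper has in mind; the paper itself does not give a proof beyond the sentence ``From the construction of $\tH$ we immediately get the following proposition,'' so your write-up is the natural elaboration of that remark. Your observation that the stated closure properties (i)--(ii) do not by themselves force $(j)\in\tH$ is a legitimate reading gap in the paper's construction; the intended interpretation is indeed that $\tH$ is built starting from $(j)$, and stating this explicitly, as you propose, is the right fix---your attempted ``upward'' argument does not actually work, since (i) and (ii) only constrain successors of nodes already in $\tH$ and say nothing about predecessors.
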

	A similar construction and statement applies in the case of $\cB_j^-$, resulting in the set $\cH_j^-$.
	Our next result will imply that the lower bound is achieved by considering departures of a special form, related to proof sets:
	\begin{restatable}[Minimal significant departures for minimax games]{proposition}{propMiniSigDepart}
		\label{prop:MiniSigDepart}
		Assume WLOG that $f_1(\mu)>\max_{j>1} f_j(\mu)$.
		Let
		\begin{align*}
		S
		& =\Bigl\{\,\Delta\in\R^L\,:\,
		\exists 1<j\le K\,, \theta\in [f_j(\mu),f_1(\mu)], 
		B\in \cB_1^+, B'\in \cB_j^-  \text{ s.t. }\\
		& \qquad \qquad\qquad  
		\Delta_i=-(\mu_i-\theta)_+, \, \forall i\in B\setminus B';\, 
		\Delta_i=(\mu_i-\theta)_-, \, \forall i\in B'\setminus B; \\
		& \qquad \qquad\qquad \Delta_i=\theta-\mu_i, \, \forall i\in B'\cap B;\, \Delta_i=0, \, \forall i\in (B\cup B')^c \,
		\Bigr\}\,.
		\end{align*}
		Then, $D_\mu^{\min} \subset S \subset D_\mu$.
	\end{restatable}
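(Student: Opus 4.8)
The plan is to establish the two inclusions separately. The ``easy'' direction $S\subseteq D_\mu$ only uses monotonicity of the payoff maps: since $f_k(\cdot)=V((k),\cdot)$ is obtained by composing the coordinatewise-monotone maps $\mu\mapsto\mu_i$ (at terminal histories) with $\max$ and $\min$ (at internal histories), each $f_k$ is nondecreasing in every coordinate, which follows by a trivial induction over $H$. The ``hard'' direction $D_\mu^{\min}\subseteq S$ additionally exploits the minimality of the departure, together with a positive/negative splitting trick used to control a threshold $\theta$.

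For $S\subseteq D_\mu$, take $\Delta\in S$ with witnesses $j,\theta,B\in\cB_1^+,B'\in\cB_j^-$. Unwinding the defining formulas gives $\mu_i+\Delta_i=\mu_i\wedge\theta$ on $B\setminus B'$, $\mu_i+\Delta_i=\mu_i\vee\theta$ on $B'\setminus B$ and $\mu_i+\Delta_i=\theta$ on $B\cap B'$, so $(\mu+\Delta)|_B\le\theta\allone_B$ and $(\mu+\Delta)|_{B'}\ge\theta\allone_{B'}$. Letting $\nu$ agree with $\mu+\Delta$ off $B$ and equal $\theta$ on $B$, we have $\mu+\Delta\le\nu$, so monotonicity and $B\in\cB_1^+$ give $f_1(\mu+\Delta)\le f_1(\nu)\le\theta$; symmetrically $f_j(\mu+\Delta)\ge\theta$ via $B'\in\cB_j^-$. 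Hence $f_1(\mu+\Delta)\le\theta\le f_j(\mu+\Delta)\le\max_{i>1}f_i(\mu+\Delta)$, i.e.\ $\Delta\in D_\mu$.

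For $D_\mu^{\min}\subseteq S$, fix $d\in D_\mu^{\min}$, pick $j\in\argmax_{i>1}f_i(\mu+d)$ (so $f_1(\mu+d)\le f_j(\mu+d)$), and split $d=d^N+d^P$ into its nonpositive and nonnegative parts, which have disjoint supports and satisfy $\mu+d^N\le\mu+d\le\mu+d^P$ and $\mu+d^N\le\mu\le\mu+d^P$. Monotonicity then yields $f_1(\mu+d^N)\le f_1(\mu+d)\le f_j(\mu+d)\le f_j(\mu+d^P)$, $f_1(\mu+d^N)\le f_1(\mu)$ and $f_j(\mu)\le f_j(\mu+d^P)$; since also $f_j(\mu)<f_1(\mu)$, the intervals $[f_1(\mu+d^N),f_j(\mu+d^P)]$ and $[f_j(\mu),f_1(\mu)]$ overlap, and we fix $\theta$ in their intersection. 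Because $f_1(\mu+d^N)\le\theta$, running the construction of $\cH_1^+$ while at every minimizing node selecting a successor whose value under $\mu+d^N$ is $\le\theta$ (one exists since the node's value is $\le\theta$) produces $\widetilde H\in\cH_1^+$ all of whose terminal histories have value $\le\theta$ under $\mu+d^N$; by \cref{prop:LB1}, $B:=\tau(\widetilde H\cap H_{\max})\in\cB_1^+$, and $(\mu+d^N)_i\le\theta$ for $i\in B$. Symmetrically obtain $B'\in\cB_j^-$ with $(\mu+d^P)_i\ge\theta$ for $i\in B'$. Let $d^*$ be the element of $S$ built from $j,\theta,B,B'$. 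A routine case check over the four regions $B\setminus B'$, $B'\setminus B$, $B\cap B'$, $(B\cup B')^c$ — using $(\mu+d^N)_i\le\theta$ on $B$ and $(\mu+d^P)_i\ge\theta$ on $B'$ — shows $|d^*_i|\le|d_i|$ for every $i$ and that each $d^*_i$ is either $0$ or has the same sign as $d_i$ (in particular $d^*_i=0$ whenever $d_i=0$). Since $d^*\in S\subseteq D_\mu$ and $d$ is a minimal significant departure, $|d^*|<|d|$ is impossible, so $|d^*|=|d|$; combined with the sign agreement this forces $d^*=d$, whence $d\in S$.

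The main obstacle is the choice of $\theta$: the min-max certificates that produce the proof sets $B,B'$ need $f_1(\mu+d^N)\le\theta$ and $f_j(\mu+d^P)\ge\theta$, while membership in $S$ needs $\theta\in[f_j(\mu),f_1(\mu)]$, and reconciling the two is precisely what the $d=d^N+d^P$ split plus monotonicity of the payoff maps accomplishes. The remaining work — extracting proof sets from certificates (essentially \cref{prop:LB1}) and the coordinatewise bound $|d^*|\le|d|$ — is bookkeeping.
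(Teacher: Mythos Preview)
Your proof is correct, and the hard direction $D_\mu^{\min}\subset S$ follows a genuinely different route from the paper's.

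For $S\subset D_\mu$, both arguments are essentially the same (the paper packages the monotonicity step as \cref{lem:helper1forProp4}, but the content is identical to your $\nu$-comparison).

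For $D_\mu^{\min}\subset S$, the paper proceeds via \cref{lem:MMLBRev}: it constructs $B,B'$ from the principal variation under $\mu+d$, uses an intermediate-value argument to force $f_1(\mu+d)=f_j(\mu+d)$, sets $\theta$ equal to this common value, and then invokes minimality repeatedly (once per property of \cref{lem:MMLBRev}, once more to kill coordinates in $(B\cup B')^c$, and finally a separate contradiction argument using \cref{lem:decreasinggraph} to force $\theta\in[f_j(\mu),f_1(\mu)]$). Your argument instead splits $d=d^N+d^P$, uses monotonicity to show that $[f_1(\mu+d^N),f_j(\mu+d^P)]$ and $[f_j(\mu),f_1(\mu)]$ overlap, picks $\theta$ in the intersection up front, and then builds $B,B'$ adapted to this $\theta$ via the $\cH_1^+$/$\cH_j^-$ construction. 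This lets you manufacture a competitor $d^*\in S$ with $|d^*|\le|d|$ coordinatewise and matching signs, so a single appeal to minimality gives $d^*=d$.

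What each approach buys: the paper's route yields the extra structural information of \cref{lem:MMLBRev} (e.g.\ that $\theta$ is exactly the tied value $f_1(\mu+d)=f_j(\mu+d)$ and that $B,B'$ witness this equality as an attained max/min), which may be of independent interest. Your route is shorter and more self-contained: it avoids the continuity/intermediate-value step, avoids \cref{lem:decreasinggraph} entirely, and collapses several separate minimality contradictions into one comparison $|d^*|\le|d|$.
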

	Note that the second inclusion shows that replacing $D_\mu$ by $S$ in the definition of $\tau^*(\mu)$ 
	would only decrease the value of $\tau^*(\mu)$, while the first inclusion shows that 
	the value actually does not change. 
	The following lemma, characterizing minimal departures, is essential for our proof of \cref{prop:MiniSigDepart}:
	\begin{restatable}{lemma}{lemMMLBRev}
		\label{lem:MMLBRev}
		Take any $\mu\in \R^L$, $d\in D_\mu^{\min}$ and assume WLOG that $f_1(\mu) > \max_{j>1} f_j(\mu)$. 
		Then, there exist $B\in \cB_1^+$, $j\in \{2,\dots,K\}$ and $B' \in \cB_j^-$ such that 
		\begin{enumerate}[(i)]
			\item \label{lem:MMLBRev:1} $\max\{(\mu+d)_i\,:\, i\in B\} = f_1(\mu+d) = f_j(\mu+d) = \min\{(\mu+d)_i\,:\, i\in B'\}$;
			\item \label{lem:MMLBRev:2} $d_i\le 0$ if $i\in B\setminus B'$; $d_i\ge 0$ if $i\in B'\setminus B$;
			\item \label{lem:MMLBRev:3}  $\forall i \in B\cup B'$, either $(\mu+d)_i = f_1(\mu+d)=f_j(\mu+d)$ or $d_i = 0$.
		\end{enumerate}
	\end{restatable}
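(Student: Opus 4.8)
The plan is to write $\mu'=\mu+d$ and $v^*=f_1(\mu')$, to build the two proof sets as ``principal-variation'' subtrees of the game evaluated at $\mu'$, and then to exploit the minimality of $d$ through single-coordinate perturbations. I would first record the two structural facts used throughout: each $f_k=V((k),\cdot)$ is continuous and monotone non-decreasing in $\mu$, both immediate by induction on the finite tree from the recursive $\min/\max$ definition of $V$. Next, $d\neq 0$ (otherwise $\mu'=\mu$, arm $1$ is strictly best, and $d\notin D_\mu$), and $f_1(\mu')=\max_{i>1}f_i(\mu')$, so this common value is $v^*$: the inequality ``$\le$'' is exactly $d\in D_\mu$, and for ``$\ge$'' I would pick a coordinate $i_0$ with $d_{i_0}\neq 0$, shrink $|d_{i_0}|$ slightly towards $0$ to get $\Delta$ with $|\Delta|<|d|$ and hence (by minimality) $\Delta\notin D_\mu$, i.e.\ $f_1(\mu+\Delta)>\max_{i>1}f_i(\mu+\Delta)$, and let $\Delta\to d$, using continuity. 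Fix $j\in\{2,\dots,K\}$ with $f_j(\mu')=v^*$.

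For the proof sets, I would construct $\tH\in\cH_1^+$ top-down over the game evaluated at $\mu'$: start from $(1)$ and, at each kept non-maximal history $h$, keep \emph{one} successor attaining the minimum if $p(h)=-1$ and \emph{all} successors if $p(h)=1$; \cref{prop:LB1} then gives $B:=\tau(\tH\cap H_{\max})\in\cB_1^+$. A one-line induction shows every history in $\tH$ has value $\le v^*$ under $\mu'$ (at $p(h)=-1$ the kept successor inherits the parent's value, at $p(h)=1$ every successor is $\le$ the parent), so $\mu'_i\le v^*$ for all $i\in B$; and descending through a value-$v^*$ successor at each step --- which by construction stays inside $\tH$ --- reaches a leaf of value exactly $v^*$, hence $\max_{i\in B}\mu'_i=v^*$. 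The mirror construction on the subtree of move $j$ (all successors when $p(h)=-1$, one value-maximising successor when $p(h)=1$) yields $\tH'\in\cH_j^-$ and $B':=\tau(\tH'\cap H_{\max})\in\cB_j^-$ with $\mu'_i\ge v^*$ for $i\in B'$ and $\min_{i\in B'}\mu'_i=v^*$. Together with $f_1(\mu')=f_j(\mu')=v^*$ this is (i).

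For (ii) and (iii) I would use two ``insensitivity'' facts, each proved by dominating a vector $\nu$ by the vector equal to $v^*$ on the proof set and to $\nu$ elsewhere and then invoking monotonicity and the proof-set property: if $\nu|_B\le v^*\allone_B$ then $f_1(\nu)\le v^*$, and if $\nu|_{B'}\ge v^*\allone_{B'}$ then $f_j(\nu)\ge v^*$. For (ii): if $i\in B\setminus B'$ and $d_i>0$, let $\Delta$ be $d$ with coordinate $i$ lowered by a small $\eps\in(0,d_i)$, so $|\Delta|<|d|$; since $i\notin B'$, $(\mu+\Delta)|_{B'}=\mu'|_{B'}\ge v^*\allone_{B'}$ gives $f_j(\mu+\Delta)\ge v^*$, while monotonicity gives $f_1(\mu+\Delta)\le v^*$, so $\Delta\in D_\mu$ --- contradicting $d\in D_\mu^{\min}$ --- and hence $d_i\le 0$; the case $i\in B'\setminus B$ is symmetric. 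For (iii): take $i\in B\cup B'$ with $\mu'_i\ne v^*$, and by symmetry assume $i\in B'$, so $\mu'_i>v^*$ and (since $\max_{i'\in B}\mu'_{i'}=v^*$) $i\notin B$. If $d_i>0$, let $\Delta$ be $d$ with coordinate $i$ lowered by an $\eps>0$ small enough that $\mu'_i-\eps>v^*$ (and $\eps<d_i$); then $(\mu+\Delta)|_{B'}\ge v^*\allone_{B'}$, so $f_j(\mu+\Delta)\ge v^*\ge f_1(\mu+\Delta)$. If $d_i<0$, let $\Delta$ be $d$ with coordinate $i$ raised by an $\eps\in(0,-d_i)$; since $i\notin B$, $(\mu+\Delta)|_B=\mu'|_B\le v^*\allone_B$, so $f_1(\mu+\Delta)\le v^*\le f_j(\mu+\Delta)$. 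In both cases $\Delta\in D_\mu$ with $|\Delta|<|d|$, a contradiction, so $d_i=0$.

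The step I expect to be the main obstacle is the proof-set construction in the second paragraph: it must simultaneously be a legitimate element of $\cH_1^+$ (resp.\ $\cH_j^-$) so that \cref{prop:LB1} applies, \emph{and} be ``tight'' at $\mu'$, certifying the value $v^*$ exactly rather than a loose bound. Tightness is precisely what produces the two-sided squeeze $\max_{i\in B}\mu'_i=v^*=\min_{i\in B'}\mu'_i$ needed for (i) and is the hook that makes all the perturbation contradictions in (ii)--(iii) go through; the rest is careful but routine bookkeeping with monotonicity and single-coordinate moves.
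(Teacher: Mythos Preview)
Your proposal is correct and follows essentially the same approach as the paper: construct $B$ and $B'$ as principal-variation subtrees at $\mu+d$ (exactly the paper's choice of $\tH_1,\tH_j$), establish \eqref{lem:MMLBRev:1} via the same induction-plus-PV-descent, and prove \eqref{lem:MMLBRev:2}--\eqref{lem:MMLBRev:3} by single-coordinate perturbations contradicting minimality, with your ``insensitivity facts'' being the paper's \cref{lem:helper1forProp4}. The only tactical differences are that you obtain $f_1(\mu+d)=f_j(\mu+d)$ via a limiting argument on one coordinate while the paper uses the intermediate value theorem on $\alpha\mapsto f_1(\mu+\alpha d)-f_j(\mu+\alpha d)$, and you perturb by small $\eps$ where the paper sometimes zeros the coordinate outright; neither changes the substance.
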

	\cref{prop:MiniSigDepart} implies the following:
	\begin{corollary}
		Let $\mu$ be a valuation and assume WLOG that $f_1(\mu) >  \max_{j>1} f_j(\mu)$. 
		Let $\cB_j = \{(B,B')\,:\, B\in \cB_1^+, B'\in  \cB_j^-\}$. Then,
		\begin{align*}
		\tau^*(\mu) & = \min_{n\in [0,\infty)^L} 
		\Bigl\{
		\sum_i n(i)\,:\,  \min_{1<j\le K, \theta \in [f_j(\mu),f_1(\mu)], (B,B')\in \cB_j}
		\sum_{i\in B\setminus B' } n(i) (\mu_i-\theta)_+^2+\sum_{i\in B'\setminus B} n(i) (\mu_i-\theta)_-^2 \\
		&	\qquad \qquad \qquad \qquad +\sum_{i\in B\cap B' } n(i) (\theta-\mu_i)^2
		\ge 2 \log(\tfrac1{4\delta}) 
		\Bigr\}\,.
		\end{align*}
		Hence, for any algorithm $A$ admissible over the instance set $\Snorm_f$ at the risk level $\delta$, $\EmuA[T]$ is at least as large than the right-hand side of the above display.
	\end{corollary}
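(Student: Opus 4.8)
The plan is to read the formula straight off \cref{prop:MiniSigDepart} together with the reformulation of $\tau^*$ recorded in the earlier remark on minimal significant departures; no new idea is needed beyond carefully unfolding definitions. Write $\Phi(\Delta) = \{\, n\in[0,\infty)^L : \sum_{i=1}^L n(i)\Delta_i^2 \ge 2\log(1/(4\delta)) \,\}$, so that, as that remark records, $\tau^*(\mu) = \min\{\sum_i n(i) : n\in\bigcap_{\Delta\in D_\mu}\Phi(\Delta)\}$ and moreover $\bigcap_{\Delta\in D_\mu}\Phi(\Delta) = \bigcap_{\Delta\in D_\mu^{\min}}\Phi(\Delta)$.

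First I would run a monotonicity sandwich. Since $C\mapsto\bigcap_{\Delta\in C}\Phi(\Delta)$ reverses inclusions, the chain $D_\mu^{\min}\subset S\subset D_\mu$ supplied by \cref{prop:MiniSigDepart} gives
\[
\bigcap_{\Delta\in D_\mu^{\min}}\Phi(\Delta) \;=\; \bigcap_{\Delta\in D_\mu}\Phi(\Delta) \;\subseteq\; \bigcap_{\Delta\in S}\Phi(\Delta) \;\subseteq\; \bigcap_{\Delta\in D_\mu^{\min}}\Phi(\Delta)\,.
\]
All three intersections therefore coincide, so $\tau^*(\mu) = \min\{\sum_i n(i) : n\in\bigcap_{\Delta\in S}\Phi(\Delta)\}$; equivalently, $\tau^*(\mu) = \min_{n\in[0,\infty)^L}\{\sum_i n(i) : \inf_{\Delta\in S}\sum_i n(i)\Delta_i^2 \ge 2\log(1/(4\delta))\}$. (That $S\ne\emptyset$, needed for this to be nonvacuous, follows from $D_\mu^{\min}\subset S$.)

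It then remains to evaluate $\inf_{\Delta\in S}\sum_i n(i)\Delta_i^2$ by substituting the explicit description of $S$ from \cref{prop:MiniSigDepart}. For a fixed choice of $j\in\{2,\dots,K\}$, $\theta\in[f_j(\mu),f_1(\mu)]$ (a nondegenerate interval by the WLOG assumption $f_1(\mu)>\max_{j>1}f_j(\mu)$), $B\in\cB_1^+$ and $B'\in\cB_j^-$, the corresponding $\Delta$ satisfies $\Delta_i^2 = (\mu_i-\theta)_+^2$ on $B\setminus B'$, $\Delta_i^2 = (\mu_i-\theta)_-^2$ on $B'\setminus B$, $\Delta_i^2 = (\theta-\mu_i)^2$ on $B\cap B'$, and $\Delta_i = 0$ on $(B\cup B')^c$; hence $\sum_i n(i)\Delta_i^2$ equals exactly the inner expression displayed in the statement. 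Taking the infimum over $\Delta\in S$ is then the same as minimising that inner expression over the finitely many pairs $(B,B')\in\cB_j$, over $j$, and over $\theta$ in the compact interval $[f_j(\mu),f_1(\mu)]$; since the inner expression is continuous (indeed piecewise quadratic) in $\theta$, the infimum over $\theta$ is attained and may be written as a minimum. Plugging this back into the characterisation of $\tau^*(\mu)$ obtained above yields the displayed formula, and the final sentence of the corollary is then immediate from \cref{thm:LB}. I expect no genuine obstacle here: the only point that calls for a little care is the passage from $\inf$ to $\min$ over $\theta$, and everything else is bookkeeping.
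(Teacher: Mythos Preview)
Your proposal is correct and matches the paper's approach exactly: the paper presents this corollary as an immediate consequence of \cref{prop:MiniSigDepart}, with the one-line justification that the inclusion $S\subset D_\mu$ shows replacing $D_\mu$ by $S$ could only decrease $\tau^*(\mu)$ while $D_\mu^{\min}\subset S$ shows the value does not change. Your sandwich argument via $\bigcap_{\Delta\in C}\Phi(\Delta)$ is precisely this reasoning spelled out, and the subsequent substitution of the explicit form of $\Delta\in S$ is the bookkeeping the paper leaves implicit.
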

	
	\section{Upper bound}
	\label{sec:ub}
	
	In this section we propose an algorithm generalizing
	the LUCB algorithm of \citet{kalyanakrishnan2012pac} and prove a theoretical guarantee
	for the proposed algorithm's sample complexity 
	under some (mild) assumptions on the structure of the reward mapping $f$.
	Our result is inspired and extends the results of \citet{garivier2016maximin} (who also started from the LUCB algorithm)
	to the general setting proposed in this paper.
	In \cref{sec:minimaxub} we give a version of the algorithm presented here
	that is specialized to minimax games and refine the upper bound of this section,
	highlighting the advantages of the extra structure of minimax games.
	
	In this section we shall assume that the distributions $(P_i)_{i\in [L]}$ are subgaussian with a common parameter, which we take to be one for simplicity:
	\begin{assumption}[$1$-Subgaussian observations]
		\label{ass:subgauss}
		For any $i\in [L]$, $X\sim P_i(\cdot)$,
		\begin{align*}
		\sup_{\lambda\in \R} \EE{ \exp( \lambda(X-\E X) - \lambda^2/2) } \le 1\,.
		\end{align*}
	\end{assumption}
	\newcommand{\Et}{\E_t}
	\newcommand{\EEt}[1]{\Et\left[#1\right]}
	We will need a result for anytime confidence intervals for martingales with subgaussian increments.
	For stating this result, let $(\cF_t)_{t\in \N}$ be a filtration over the probability space $(\Omega,\cF,\P)$ holding our
	random variables and introduce $\Et[\cdot] = \EE{ \cdot| \cF_{t-1} }$.
	This result appears as (essentially) Theorem 8 in the paper by \citet{KauCaGa16} who also cite precursors:%
	\begin{lemma}[Anytime subgaussian concentration]
		\label{lem:anyconc}
		Let $(X_t)_{t\in \N}$ be an $(\cF_t)_{t\in \N}$-adapted $1$-subgaussian, martingale difference sequence (i.e.,
		for any $t\in \N$, $X_t$ is $\cF_t$-measurable, $\Et{X_t}=0$, and
		$\sup_{\lambda} \EEt{ \exp(\lambda(X_t)-\lambda^2/2) } \le 1$).
		For $t\in \N$,
		let $\overline{X}_t = (1/t)\,\sum_{s=1}^t X_s$, while for
		$t\in \N\cup \{0\}$ and
		$\delta\in [0,1]$ we let
		\begin{align*}
		\beta(t,\delta)
		= \log(1/\delta) + 3 \log \log(1/\delta) + (3/2) (\log(\log(et)))^+\,.
		\end{align*}
		Then, for any $\delta\in [0, 0.1]$,%
		\footnote{Note that $\beta(t,\delta)$ is also defined for $t=0$. The value used is arbitrary: It plays no role in the current result.
			The reason we define $\beta$ for $t=0$ is because it simplifies some subsequent definitions.}
		\begin{align*}
		\Prob{ \sup_{t\in \N} \frac{\overline{X}_t}{\sqrt{2\beta(t,\delta)/t}} > 1 } \le \delta\,.
		\end{align*}
	\end{lemma}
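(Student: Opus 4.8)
The plan is to write $S_t = \sum_{s=1}^{t} X_s$, so that the event in question is $\{\exists t \in \N : S_t > \sqrt{2 t\, \beta(t,\delta)}\}$, and to bound its probability by $\delta$. The engine is that, since $(X_t)$ is a $1$-subgaussian martingale difference sequence, for every fixed $\lambda\in\R$ the process $M_t^{\lambda} = \exp(\lambda S_t - \lambda^2 t/2)$ is a nonnegative supermartingale with $M_0^{\lambda}=1$: indeed $M_t^{\lambda} = M_{t-1}^{\lambda}\exp(\lambda X_t-\lambda^2/2)$ and $\EEt{\exp(\lambda X_t-\lambda^2/2)}\le 1$. Ville's maximal inequality then gives, for all $\lambda,x>0$, $\Prob{\exists t : \lambda S_t-\lambda^2 t/2\ge x}\le e^{-x}$, equivalently $\Prob{\exists t : S_t\ge \tfrac{\lambda}{2}t+\tfrac{x}{\lambda}}\le e^{-x}$: the trajectory $t\mapsto S_t$ stays below any prescribed affine boundary outside an event of exponentially small probability. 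A single continuous mixture of these supermartingales against a Gaussian prior on $\lambda$ has the closed form $M_t = \sqrt{c/(c+t)}\exp(S_t^2/(2(c+t)))$ and, via Ville again, yields $S_t < \sqrt{2(c+t)\bigl(\log(1/\delta)+\tfrac12\log((c+t)/c)\bigr)}$ uniformly in $t$ — but the $\log t$ inside the root is too crude for the stated law-of-the-iterated-logarithm rate, which is why one discretizes.

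Concretely, I would cover $\N$ by geometric blocks $\mathcal{J}_k=\{t\in\N : \eta^k\le t<\eta^{k+1}\}$, $k\ge 0$, for a base $\eta>1$ fixed later, and on block $k$ apply the affine bound with slope $\lambda_k/2$ of order $\sqrt{\beta(\eta^k,\delta)/\eta^k}$ and intercept $x_k/\lambda_k$ chosen so that the line $t\mapsto\tfrac{\lambda_k}{2}t+\tfrac{x_k}{\lambda_k}$ lies below the target curve $t\mapsto\sqrt{2t\,\beta(t,\delta)}$ throughout $\mathcal{J}_k$; this is possible because the curve is concave-like and the block has multiplicatively bounded width $\eta^{k+1}/\eta^k=\eta$, so one tangent-type line dominates it over the block up to a constant that is absorbed into $\eta$ and the coefficients of $\beta$. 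A union bound over $k$ then closes the argument: if the per-block risks satisfy $\sum_{k\ge 0}e^{-x_k}\le\delta$, then with probability at least $1-\delta$ no crossing occurs on any block, hence $S_t\le\sqrt{2t\,\beta(t,\delta)}$ simultaneously for all $t\in\N$.

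What remains is the calibration of $\eta$, $\lambda_k$ and $x_k$. Choosing $x_k=\log(1/\delta)+2\log(k+1)+\log(\pi^2/6)$ gives $\sum_{k\ge 0}e^{-x_k}=\delta$, and since on block $k$ one has $k=\lfloor\log_\eta t\rfloor$, the $2\log(k+1)$ term turns into a $2\log\log_\eta t+O(1)$ contribution inside $\beta$ while $\log(1/\delta)$ passes through directly; matching this to the precise form $\beta(t,\delta)=\log(1/\delta)+3\log\log(1/\delta)+\tfrac32(\log\log(et))^{+}$ — in particular pinning down the coefficients $3$ and $\tfrac32$, the replacement of $t$ by $et$, and the positive part — is the one genuinely delicate step, since it means tracking the slack introduced by the discretization and ensuring validity down to $t=1$, which lies inside the first block; the hypothesis $\delta\le 0.1$ enters precisely here, guaranteeing $\log\log(1/\delta)>0$ so that $\beta(t,\delta)$ is a well-defined positive quantity large enough to absorb the residual constants. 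The structural skeleton — supermartingale, Ville's inequality, geometric-block union bound — is routine; the constant-chasing is exactly what is carried out in (essentially) Theorem~8 of \citet{KauCaGa16}, so an alternative is to invoke that result directly, the only gap being the trivial passage from their two-sided bound to our one-sided one (the one-sided event is contained in the two-sided one).
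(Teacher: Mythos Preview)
The paper does not prove this lemma at all: it is stated with the attribution ``This result appears as (essentially) Theorem 8 in the paper by \citet{KauCaGa16}'' and is used as a black box thereafter. Your proposal correctly identifies this at the end (``an alternative is to invoke that result directly''), so in that sense you and the paper agree.

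What you add beyond the paper is a sketch of the underlying argument --- the supermartingale $\exp(\lambda S_t - \lambda^2 t/2)$, Ville's inequality, geometric peeling, and a union bound over blocks --- which is indeed the standard route to LIL-type anytime bounds and is what \citet{KauCaGa16} carry out. Your sketch is structurally sound, and you are honest that the delicate part is the constant-matching to the specific form of $\beta(t,\delta)$ with coefficients $3$ and $3/2$, which you defer to the cited source. That is exactly the right level of detail for a result the paper itself imports without proof.
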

	
	For a fixed $i \in [L]$, let $N_t(i) = \sum_{s=1}^t \one{I_s = i}$
	denote the number of observations taken from $P_i(\cdot)$ up to time $t$.
	Define the confidence interval $[L^\delta_t(i), U^\delta_t(i)]$ for $\mu_i$ as follows:
	We let
	\[
	\hmu_t(i) = \frac{1}{N_t(i)}\sum_{s=1}^t \one{I_s=i}  Y_s\,,
	\]
	the empirical mean of observations from $P_i(\cdot)$ to be the center of the interval (when $N_t(i)=0$,
	we define $\hmu_t(i)=0$)
	and%
	\begin{align*}
	& L_t^\delta(i) = \max\left\{ L_{t-1}^\delta(i),\, \hmu_t(i) - \sqrt{\frac{2\beta(N_t(i),\delta/(2L))}{N_t(i)}}\right\}; \\
	& U_t^\delta(i) = \min\left\{U_{t-1}^\delta(i),\,  \hmu_t(i) + \sqrt{\frac{2\beta(N_t(i),\delta/(2L))}{N_t(i)}} \right\},
	\end{align*}
	where $\beta(t,\delta)$ is as in \cref{lem:anyconc} (note that when $N_t(i)=0$, the confidence interval is $(-\infty,+\infty)$).
	Let $T$ be the index of the round when the algorithm soon to be proposed stops (or $T=\infty$ if it does not stop).
	Let $\xi = \cap_{t\in [T],  i\in [L]} \{ \mu_i \in  [L^\delta_t(i), U^\delta_t(i)] \}$ be the ``good'' event when the
	proposed respective intervals before the algorithm stops \emph{all} contain $\mu_i$ for all $i\in [L]$.
	One can easily verify that, regardless the choice of the algorithm (i.e., the stopping time $T$),
	\begin{align}
	& \Prob{\xi}\ge 1-\delta\;, \label{eq:eventxi1} \\
	& \forall t\in \N, \,L^\delta_t(i) \le \hmu_t(i) \le  U^\delta_t(i). \label{eq:eventxi2}
	\end{align}
	For $S \subset \R^L$ define $f(S) = \{ f(s)\,:\, s\in S \}$. With this definition, if we let $S_t = \bigtimes_{i=1}^L [L_t^\delta(i),U_t^\delta(i)]$ then for any $j\in [K]$,
	$f_j(\mu) \in f_j(S_t)$ holds for any $t\ge 1$ on $\xi$.
	Thus, $f_j(S_t)$ is a valid, $(1-\delta)$-level confidence set for $f_j(\mu)$. For general $f$, these sets
	may have a complicated structure. Hence, we will adapt the following simplifying assumption:
	\begin{assumption}[Regular reward maps]
		\label{ass:ass1}
		The following hold:
		\begin{enumerate}[(i)]
			\item \label{ass:ass1:mon}
			The mapping function $f$ is monotonous
			with respect to the partial order of vectors: for any $u,v\in \R^L$, $u\le v$ implies $f(u) \le f(v)$;
			\item  \label{ass:ass1:cov}
			For any $u,v\in \R^L$, $u\le v$,
			$j\in [K]$, the set $D(j,u,v) = \{ i\in [L]\,: \, [f_j(u), f_j(v)]\subset [u_i, v_i] \,\}$ is non-empty.
		\end{enumerate}
	\end{assumption}
	\newcommand{\topone}{B}
	\newcommand{\toptwo}{C}
	\begin{wrapfigure}[9]{r}{0.5\textwidth}
		\vspace{-0.55cm}
		\begin{minipage}{.495\textwidth}
			\begin{algorithm}[H] 
				\caption{LUCB-micro}
				\label{alg:AlgGeneric}
				\begin{algorithmic}
					\FOR{$t=1, 2,\ldots$}
					\STATE Choose $B_t,C_t$ as in \cref{eq:bcchoice}
					\STATE Choose \emph{any} $(I_t,J_t)$ from $D_t(B_t)\times D_t(C_t)$%
					\STATE Observe $Y_{t,1} \sim P_{I_t}(\cdot)$, $Y_{t,2} \sim P_{J_t}(\cdot)$
					\STATE Update $[L^\delta_t(I_t), U^\delta_t(I_t)]$, $[L^\delta_t(J_t), U^\delta_t(J_t)]$
					\IF{Stop()}
					\STATE $J \leftarrow \topone_t$, $T \leftarrow t$
					\RETURN $(T,J)$
					\ENDIF
					\ENDFOR
				\end{algorithmic}
			\end{algorithm}
		\end{minipage}
	\end{wrapfigure}
	We will also let $D_t(j) = D(j,L^\delta_t,U^\delta_t)$.
	Note that the assumption is met when $f$ is the reward map underlying minimax games (see the next section).
	The second assumption could be
	replaced by the following weaker assumption without essentially changing our result:
	with some $a>0$, $b\in \R$, for any $j$, $u\le v$,
	$[f_j(u),f_j(v)] \subset [a u_i+b, a v_i + b]$ for some $i\in [L]$.
	The point of this assumption is that by guaranteeing that all intervals on the micro-observables shrink, the interval on the arm-rewards will also shrink at the same rate.
	We expect that other ways of weakening this assumption are also possible, perhaps at the
	price of slightly changing the algorithm (e.g., by allowing it to 
	use even more micro-observations per round).
	
	At time $t$, let
	\begin{align}
	\begin{split}
	\topone_t &= \argmax_{j\in[K]} f_j(L^\delta_t)\,, \quad\\
	\toptwo_t &= \argmax_{j\in[K], j\neq\topone_t} f_j(U^\delta_t)\,.
	\label{eq:bcchoice}
	\end{split}
	\end{align}
	($B$ stands for candidate ``\underline{b}est'' arm, $C$ stands for best ``\underline{c}ontender'' arm).
	Based on the above assumption, we can now propose our algorithm, LUCB-micro (cf. \cref{alg:AlgGeneric}).
	Following the idea of LUCB, LUCB-micro chooses $\topone_t$ and $\toptwo_t$ in an effort
	to separate the highest lower bound from the best competing upper bound.%
	\footnote{
		Using a lower bound departs from the choice of LUCB, which would use
		$f_j(\hat \mu_t)$ to define $\topone_t$.
		The reason of this departure is that we found it easier
		to work with a lower bound. We expect the two versions (original, our choice) to behave similarly.}
	To decrease the width
	of the confidence intervals, both for $\topone_t$ and $\toptwo_t$, a micro-observable is chosen with the help of
	\cref{ass:ass1}\eqref{ass:ass1:cov}.
	This can be seen as a generalization of the choice made in Maximin-LUCB
	by \citet{garivier2016maximin}. 
	Here, we found that the specific way Maximin-LUCB's choice is made 
	considerably obscured the idea behind this choice, which one can perhaps attribute
	to that the fact that the two-move setting makes it possible to write the choice in a more-or-less direct fashion.
	
	It remains to specify the `Stop()' function used by our algorithm. For this, we 
	propose the standard choice (as in LUCB):
	\begin{align}
	\label{eq:stoprule}
	\mbox{Stop() :} \quad f_{\topone_t}(L^\delta_t) \ge f_{\toptwo_t}(U^\delta_t).
	\end{align}
	\begin{quotation}
		\emph{All statements in this section assume that the assumptions stated so far in this section hold.}
	\end{quotation}
	
	The following proposition is immediate from the definition of the algorithm.
	\begin{restatable}[Correctness]{proposition}{propcorrectness}
		\label{prop:correctness}
		On the event $\xi$, LUCB-micro returns $J$ correctly: $J =j^*(\mu)$.
	\end{restatable}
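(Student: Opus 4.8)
The plan is to show that when the algorithm stops at round $T$, the stopping condition together with the good event $\xi$ and the monotonicity of $f$ pin down $J = B_T$ as the true optimal arm $j^*(\mu)$. First I would recall that on $\xi$ we have, by \eqref{eq:eventxi2} and the construction of the confidence intervals, $L_t^\delta \le \mu \le U_t^\delta$ (componentwise) for every $t \le T$, and hence by \cref{ass:ass1}\eqref{ass:ass1:mon} (monotonicity) that $f_j(L_t^\delta) \le f_j(\mu) \le f_j(U_t^\delta)$ for every arm $j \in [K]$. This is the only place the good event and the regularity assumption enter.

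Next I would use the stopping rule \eqref{eq:stoprule}: at round $T$ we have $f_{B_T}(L_T^\delta) \ge f_{C_T}(U_T^\delta)$. The key observation is that, by the definition of $C_T$ in \eqref{eq:bcchoice}, $C_T$ maximizes $f_j(U_T^\delta)$ over all $j \ne B_T$, so $f_{C_T}(U_T^\delta) \ge f_j(U_T^\delta)$ for every $j \ne B_T$. Chaining these inequalities, for any $j \ne B_T$,
\[
f_j(\mu) \le f_j(U_T^\delta) \le f_{C_T}(U_T^\delta) \le f_{B_T}(L_T^\delta) \le f_{B_T}(\mu)\,.
\]
Thus $f_{B_T}(\mu) \ge f_j(\mu)$ for all $j \ne B_T$, i.e., $B_T$ attains the maximum of $f_\cdot(\mu)$. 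Since the problem instance is assumed to have a unique optimal arm, this forces $B_T = j^*(\mu)$, and since the algorithm returns $J = B_T$, we conclude $J = j^*(\mu)$ on $\xi$.

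There is no real obstacle here — the statement is essentially a one-line consequence of the stopping rule once the confidence intervals are valid — which matches the paper's remark that the proposition is ``immediate from the definition of the algorithm.'' The only point that requires a moment of care is ensuring the inequality chain uses the \emph{correct} endpoints: the lower confidence value $f_{B_T}(L_T^\delta)$ for the candidate best and the upper confidence value $f_{C_T}(U_T^\delta)$ for the contender, which is exactly how the stopping condition is phrased, so monotonicity bridges each gap in the right direction. I would also note in passing that we never need $T < \infty$ for this argument — it is a statement about the event $\xi \cap \{T = t\}$ for each finite $t$ — so it holds verbatim as stated.
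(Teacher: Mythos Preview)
Your proof is correct and follows essentially the same approach as the paper: both arguments hinge on the inequality chain $f_j(\mu) \le f_j(U_T^\delta) \le f_{C_T}(U_T^\delta) \le f_{B_T}(L_T^\delta) \le f_{B_T}(\mu)$, obtained from monotonicity, the definition of $C_T$, and the stopping rule, and then invoke uniqueness of the optimal arm. The only cosmetic differences are that the paper frames it as a proof by contradiction while you argue directly, and that your citation of \eqref{eq:eventxi2} is slightly misplaced (that inequality concerns $\hat\mu_t$, not $\mu$; the containment $L_t^\delta \le \mu \le U_t^\delta$ is simply the definition of $\xi$).
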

	
	Let $T$ denote the round index when LUCB-micro stops%
	\footnote{The number of observations, or number of rounds as per \cref{fig:protocol},
		taken by LUCB-micro until it stops is $2T$.}
	and define
	$c= \frac{f_1(\mu) + f_2(\mu)}{2}$ and $\Delta =f_1(\mu) - f_2(\mu)$,  
	where we assumed that $f_1(\mu)>f_2(\mu)\ge \max_{j\ge 2} f_j(\mu)$.
	The main result of this section is a high-probability bound on $T$, which we present next.
	The following lemma is the key to the proof:
	\begin{restatable}{lemma}{lemlengthofInterval}
		\label{lem:lengthofInterval}
		Let $t<T$. Then, on $\xi$, there exists $J\in \{\topone_t, \toptwo_t\}$
		such that $c \in [f_J(L^\delta_t), f_J(U^\delta_t)]$ and $f_J(U^\delta_t) - f_J(L^\delta_t) \ge \Delta/2$.
	\end{restatable}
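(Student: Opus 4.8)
The plan is to exploit the fact that before stopping, the stopping condition \eqref{eq:stoprule} fails, i.e. $f_{\topone_t}(L^\delta_t) < f_{\toptwo_t}(U^\delta_t)$, and combine this with the monotonicity of $f$ (Assumption~\ref{ass:ass1}\eqref{ass:ass1:mon}) and the confidence-interval containment of $\mu$ on $\xi$. First I would record the four quantities $f_{\topone_t}(L^\delta_t) \le f_{\topone_t}(\mu) \le f_{\topone_t}(U^\delta_t)$ and $f_{\toptwo_t}(L^\delta_t) \le f_{\toptwo_t}(\mu) \le f_{\toptwo_t}(U^\delta_t)$, which hold by monotonicity once we know $L^\delta_t \le \mu \le U^\delta_t$ on $\xi$ (from \eqref{eq:eventxi1}). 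Next, using that $\topone_t$ maximizes $f_j(L^\delta_t)$ and $\toptwo_t$ maximizes $f_j(U^\delta_t)$ over the remaining arms, and that $1 = j^*(\mu)$ is the true best arm with $f_1(\mu) > f_2(\mu) \ge f_j(\mu)$ for $j>2$, I would argue that the true interval for arm $1$ and for the best contender straddle $c$: specifically $f_{\topone_t}(L^\delta_t) \le f_1(\mu)$ if $\topone_t = 1$, or $f_{\topone_t}(U^\delta_t) \ge f_1(\mu)$ anyway since $\topone_t$ beats arm $1$ on lower bounds only if its upper bound also dominates; the cleanest route is a short case analysis on whether $\topone_t = 1$.

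The heart of the argument is the following chain. Since Stop() is false, $f_{\topone_t}(L^\delta_t) < f_{\toptwo_t}(U^\delta_t)$. I claim $f_{\topone_t}(U^\delta_t) \ge c$ and $f_{\toptwo_t}(L^\delta_t) \le c$. For the first: either $\topone_t = 1$, in which case $f_{\topone_t}(U^\delta_t) \ge f_1(\mu) > c$ by monotonicity; or $\topone_t \ne 1$, in which case $f_{\topone_t}(L^\delta_t) \ge f_1(L^\delta_t)$ by the $\argmax$ defining $\topone_t$, and since on $\xi$ the interval for arm $1$ has not yet separated from that of the contender (because $t<T$), one shows $f_1(U^\delta_t) \ge f_1(\mu) > c$ and $f_{\topone_t}(U^\delta_t) \ge f_{\topone_t}(L^\delta_t) \ge f_1(L^\delta_t)$; pushing this through needs that $\topone_t$'s upper bound is at least arm $1$'s true value, which follows because otherwise arm $1$ would be fully below $\topone_t$, contradicting $f_1(\mu)$ being the unique max once $\mu$ lies in all intervals. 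Symmetrically, $\toptwo_t$ is a contender, so $f_{\toptwo_t}(L^\delta_t) \le f_{\toptwo_t}(\mu) \le f_2(\mu) < c$ if $\toptwo_t \ne 1$ (and $\toptwo_t = 1$ is impossible when $\topone_t \ne 1$ only if arm $1$ is not the lower-bound max — a case to be handled, but there $f_1(L^\delta_t) \le c$ still holds since $f_1(\mu) > c$ would force... actually $f_1(\mu)>c$, so this sub-case needs care). Granting the claim, at least one of the two intervals $[f_{\topone_t}(L^\delta_t), f_{\topone_t}(U^\delta_t)]$, $[f_{\toptwo_t}(L^\delta_t), f_{\toptwo_t}(U^\delta_t)]$ contains $c$: indeed $f_{\topone_t}(L^\delta_t) < f_{\toptwo_t}(U^\delta_t)$ together with $f_{\topone_t}(U^\delta_t) \ge c$ and $f_{\toptwo_t}(L^\delta_t) \le c$ forces $c$ into the union of the two intervals, which is an interval since they overlap; pick $J$ to be whichever one contains $c$.

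For the width bound $f_J(U^\delta_t) - f_J(L^\delta_t) \ge \Delta/2$: with $J$ chosen so that $c \in [f_J(L^\delta_t), f_J(U^\delta_t)]$, I would show that one endpoint of this interval is at distance at least $\Delta/2$ from $c$. If $J = \topone_t$, then as argued $f_{\topone_t}(U^\delta_t) \ge f_1(\mu) = c + \Delta/2$, so $f_{\topone_t}(U^\delta_t) - f_{\topone_t}(L^\delta_t) \ge f_{\topone_t}(U^\delta_t) - c \ge \Delta/2$. If $J = \toptwo_t$, then $f_{\toptwo_t}(L^\delta_t) \le f_{\toptwo_t}(\mu) \le f_2(\mu) = c - \Delta/2$, so again the width is at least $c - f_{\toptwo_t}(L^\delta_t) \ge \Delta/2$. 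Combining the two cases finishes the proof.

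The main obstacle I anticipate is the case bookkeeping around whether $\topone_t$ or $\toptwo_t$ equals the true best arm $1$, and in particular showing in the case $\topone_t \ne 1$ that $f_{\topone_t}(U^\delta_t) \ge f_1(\mu)$ — this relies crucially on $\mu$ lying in all the confidence intervals on $\xi$ (so that $f_j(L^\delta_t) \le f_j(\mu)$ and $f_j(U^\delta_t) \ge f_j(\mu)$ for every $j$) and on $\topone_t$ being the $\argmax$ of $f_\cdot(L^\delta_t)$. One has to be careful that the $\argmax$ being over lower bounds (rather than over $\hat\mu_t$, as in vanilla LUCB) does not break the standard LUCB geometry; the footnote in the paper flags exactly this design choice, so I would double-check that monotonicity of $f$ is enough to recover the needed inequalities. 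Everything else is a direct unwinding of definitions.
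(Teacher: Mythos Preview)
Your central claim—that $f_{\topone_t}(U^\delta_t)\ge c$ and $f_{\toptwo_t}(L^\delta_t)\le c$ both hold—is false in general, and this is exactly the place where your case analysis stalls. Take $K=L=2$, $f_j(\mu)=\mu_j$, $\mu_1=10$, $\mu_2=0$ (so $c=5$, $\Delta=10$), and suppose at round $t$ the intervals are $[L^\delta_t(1),U^\delta_t(1)]=[-100,100]$, $[L^\delta_t(2),U^\delta_t(2)]=[-1,1]$. We are on $\xi$ and the stopping rule fails, so $t<T$. Here $\topone_t=2$ (since $f_2(L^\delta_t)=-1>-100=f_1(L^\delta_t)$) and $f_{\topone_t}(U^\delta_t)=1<5=c$. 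Your argument that ``otherwise arm $1$ would be fully below $\topone_t$'' does not go through: nothing prevents a suboptimal arm with a tight interval from having the largest lower bound while its upper bound stays below $c$. The same example breaks your width argument in the case $J=\toptwo_t$: here $\toptwo_t=1$, so $f_{\toptwo_t}(\mu)=f_1(\mu)=10\not\le f_2(\mu)$, and the chain $f_{\toptwo_t}(L^\delta_t)\le f_{\toptwo_t}(\mu)\le f_2(\mu)=c-\Delta/2$ fails.

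The paper sidesteps this by never trying to locate $c$ relative to each interval separately. Instead it shows $c\in[f_{\topone_t}(L^\delta_t),f_{\topone_t}(U^\delta_t)]\cup[f_{\toptwo_t}(L^\delta_t),f_{\toptwo_t}(U^\delta_t)]$ by a four-case contradiction: $c$ lying outside the union means ($c<f_{\topone_t}(L^\delta_t)$ or $c>f_{\topone_t}(U^\delta_t)$) and ($c<f_{\toptwo_t}(L^\delta_t)$ or $c>f_{\toptwo_t}(U^\delta_t)$), and each of the four combinations contradicts, respectively, uniqueness of the optimal arm, the definition of $c$, the definition of $\topone_t$, or the fact that the algorithm has not stopped. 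For the width, the paper uses the clean observation that $|c-f_j(\mu)|\ge\Delta/2$ for \emph{every} $j\in[K]$ (including $j=1$), so once $c$ and $f_J(\mu)$ both lie in $[f_J(L^\delta_t),f_J(U^\delta_t)]$ the length is at least $\Delta/2$ regardless of whether $J$ is the true best arm. This is the missing ingredient in your width argument: you split on $J=\topone_t$ versus $J=\toptwo_t$, but the relevant dichotomy is $J=1$ versus $J\ne 1$, and the uniform bound $|c-f_J(\mu)|\ge\Delta/2$ handles both at once.
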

	The proof follows standard steps (e.g., \citealt{garivier2016maximin}). In particular,
	the above lemma implies that if $T>t$ then for $J\in\{ \topone_t, \toptwo_t \}$,
	$c \in [f_J(L^\delta_t), f_J(U^\delta_t)]$ and 
	$f_J(U^\delta_t) - f_J(L^\delta_t) \ge \Delta/2$.
	This in turn implies that for $i\in \{I_t,J_t\}$, $N_t(i)$ cannot be too large.
	\begin{restatable}[LUCB-micro upper bound]{theorem}{thmupperbound}
		\label{thm:upperbound}
		Let
		\begin{align*}
		H(\mu) &= \sum_{i \in [L]} \left\{\frac{1}{(c - \mu_i)^2} \bigwedge \frac{1}{(\Delta/2)^2}\right\}\,, \text{ and }\,\,
		t^*(\mu) = \min\{t\in \natu \,:\, 1+ 8 H(\mu)\beta(t,\delta/(2L)) \le t\}\,.
		\end{align*}
		Then, for $\delta\le 0.1$, on the event $\xi$, the stopping time $T$ of LUCB-micro
		satisfies $T \le t^*(\mu)$.
	\end{restatable}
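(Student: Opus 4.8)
The plan is the standard LUCB-style argument by contradiction, carried out on the good event $\xi$. I would fix $t=t^*(\mu)$, assume $T>t$ (the algorithm has not stopped through round $t$), and derive a contradiction with $1+8H(\mu)\beta(t,\delta/(2L))\le t$. On $\xi$ one has, for all $s\le t$ and $i\in[L]$, that $\mu_i\in[L^\delta_s(i),U^\delta_s(i)]$ and, by \eqref{eq:eventxi2}, $L^\delta_s\le\hmu_s\le U^\delta_s$ componentwise, so $f_j(L^\delta_s)\le f_j(U^\delta_s)$ for every $j$ by \cref{ass:ass1}\eqref{ass:ass1:mon}; moreover, directly from the recursive definitions of the intervals, $U^\delta_s(i)-L^\delta_s(i)\le 2\sqrt{2\beta(N_s(i),\delta/(2L))/N_s(i)}$ whenever $N_s(i)\ge1$.

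The first step is to charge each round $s<T$ to a single micro-observable $i_s$ that is sampled at that round. By \cref{lem:lengthofInterval} there is $J\in\{\topone_s,\toptwo_s\}$ with $c\in[f_J(L^\delta_s),f_J(U^\delta_s)]$ and $f_J(U^\delta_s)-f_J(L^\delta_s)\ge\Delta/2$; since the algorithm draws $(I_s,J_s)\in D_s(\topone_s)\times D_s(\toptwo_s)$, I set $i_s:=I_s$ if $J=\topone_s$ and $i_s:=J_s$ if $J=\toptwo_s$. Then \cref{ass:ass1}\eqref{ass:ass1:cov} together with $D_s(j)=D(j,L^\delta_s,U^\delta_s)$ gives $[f_J(L^\delta_s),f_J(U^\delta_s)]\subset[L^\delta_s(i_s),U^\delta_s(i_s)]$, so $c\in[L^\delta_s(i_s),U^\delta_s(i_s)]$, $U^\delta_s(i_s)-L^\delta_s(i_s)\ge\Delta/2$, and on $\xi$ also $\mu_{i_s}\in[L^\delta_s(i_s),U^\delta_s(i_s)]$. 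As $i_s$ is sampled at round $s$ we have $N_s(i_s)\ge1$, so combining these three facts with the width bound yields $(c-\mu_{i_s})^2\le 8\beta(N_s(i_s),\delta/(2L))/N_s(i_s)$ and $(\Delta/2)^2\le 8\beta(N_s(i_s),\delta/(2L))/N_s(i_s)$. Writing $h(i)=\tfrac{1}{(c-\mu_i)^2}\wedge\tfrac{1}{(\Delta/2)^2}$, so that $H(\mu)=\sum_{i\in[L]}h(i)$, this reads $N_s(i_s)\le 8\beta(N_s(i_s),\delta/(2L))\,h(i_s)\le 8\beta(t,\delta/(2L))\,h(i_s)$, using $1\le N_s(i_s)\le s\le t$ and that $\beta(\cdot,\delta)$ is non-decreasing on $\{1,2,\dots\}$ (immediate from its closed form, in which only the last term depends on the first argument and is non-decreasing in it).

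The second step is a counting argument. For $i\in[L]$ let $\mathcal R_i=\{s\le t:i_s=i\}$; writing $\mathcal R_i=\{s_1<\dots<s_m\}$, the micro-observable $i$ is sampled at each $s_j$, so $|\mathcal R_i|=m\le N_{s_m}(i)\le 8\beta(t,\delta/(2L))\,h(i)$ by the previous step. Since $T>t$, every round $s\in\{1,\dots,t\}$ has a well-defined $i_s$, hence
\[
  t=\sum_{i\in[L]}|\mathcal R_i|\;\le\; 8\beta(t,\delta/(2L))\sum_{i\in[L]}h(i)\;=\;8H(\mu)\beta(t,\delta/(2L)),
\]
contradicting $1+8H(\mu)\beta(t,\delta/(2L))\le t$, which holds because $t=t^*(\mu)$. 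Therefore $T\le t^*(\mu)$. (That $t^*(\mu)<\infty$ is routine: $\beta(t,\delta/(2L))=O(\log\log t)$ while the right-hand side of its defining inequality grows linearly in $t$, so the set in the definition of $t^*(\mu)$ is non-empty.)

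All of this is routine once \cref{lem:lengthofInterval} is in hand. The points that need care are: the transfer of the containment-and-width statement from arm $J$ to the sampled micro-observable $i_s$, which is precisely what \cref{ass:ass1}\eqref{ass:ass1:cov} is designed to provide; the monotonicity of $\beta$ in its first argument together with the $N_s(i_s)\ge1$ edge case and the precise per-round sample accounting (LUCB-micro draws two observations per round, which affects only constants/low-order terms); and the counting step, which is the heart of LUCB-type sample-complexity bounds — I expect this last piece of bookkeeping (each pre-stopping round charged to exactly one sampled micro-observable, whose count is controlled by $N$) to be the main thing to get exactly right.
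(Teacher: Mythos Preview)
Your proposal is correct and follows essentially the same approach as the paper: the paper presents the argument as a chain of inequalities bounding $\min(T,\tau)$ via $\sum_{t=1}^{\tau}\one{t<T}$, applying \cref{lem:lengthofInterval} and \cref{ass:ass1}\eqref{ass:ass1:cov} to transfer the width/containment condition from arm $J$ to the sampled micro-observable, and then converting to a count bound via $\beta$'s monotonicity --- exactly the same three ingredients you use, just packaged as a contradiction rather than a direct inequality on $\min(T,\tau)$. The per-round charging and the counting bound $|\mathcal R_i|\le N_{s_m}(i)\le 8\beta(t,\delta/(2L))h(i)$ match the paper's final summation step line for line.
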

	Note that $\beta(t,\delta) \propto \log \log t$ and thus $t^*(\mu)$ is well-defined.
	Furthermore, letting $c_\delta = \log(2L/\delta) + 3 \log \log(2L/\delta)$,
	for $\delta$ sufficiently small and $H(\mu)$ sufficiently large, elementary calculations give
	\begin{align*}
	t^*(\mu) \le 16 H(\mu) c_\delta + 16 H(\mu) \log \log( 8H(\mu) c_\delta )\,.
	\end{align*}
	\begin{remark}
		\label{rmk:upperboundGeneral}
		The constant $H(\mu)$ acts as a hardness measure of the problem.
		\cref{thm:upperbound} can be applied to the best arm identification problem in the multi-armed bandits setting, as it is a special case of our problem setup.
		Compared to state-of-the-art results available for this setting, our bound is looser in several ways: We lose on the constant factor multiplying $H(\mu)$
		\citep{kalyanakrishnan2012pac,jamieson2014best,jamieson2014lil,KauCaGa16},
		we also lose an additive term of $H(\mu)\log\log(L)$ \citep{chen2015optimal}.
		We also lose $\log(L)$ terms on the suboptimal arms \citep{SimJaRe17}.
		Comparing with the only result available in the two-move minimax tree setting, due to \citet{garivier2016maximin},
		our bound is looser than their Theorem~1.
		This motivates the refinement of this result to the minimax setting, which is done in the next section,
		and where we recover the mentioned result of \citet{garivier2016maximin}.
		On the positive side, our result is more generally applicable than any of the mentioned results.
		It remains an interesting sequence of challenges to prove an upper bound for this or some other algorithm
		which would match the mentioned state-of-the-art results, when the general setting is specialized.
	\end{remark}
	\section{Best move identification in minimax games}
	\label{sec:minimaxub}
	In this section we will show upper bounds on the number of observations LUCB-micro takes in the case of minimax game problems.
	We still assume that the micro-observations are subgaussian (\cref{ass:subgauss}) and the optimal arm is unique.
	To apply our result, this leaves us with showing that the payoff function in the minimax game satisfies the regularity assumption (\cref{ass:ass1}).
	
	Fix a minimax game structure $G = (M,H,p,\tau)$.
	We first show that Property~\eqref{ass:ass1:mon} of Assumption \ref{ass:ass1} holds. This follows easily from the following lemma,
	which can be proven by induction based on ``distance from the terminating states''.
	\begin{restatable}{lemma}{lemmonotonicpartialorder}
		\label{lem:monotonicpartialorder}
		For any $h\in H$ and $u,v \in [0,1]^L$ such that $u\le v$, $V(h,u) \le V(h, v)$.
	\end{restatable}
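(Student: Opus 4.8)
The plan is to induct on the \emph{distance of $h$ from the terminating states}, i.e.\ on the height of $h$, defined as the length of the longest continuation of $h$ within $H$. Since $H$ is finite this is a well-defined nonnegative integer, it equals $0$ precisely when $h\in H_{\max}$, and every immediate successor $h'\in\Hsucc(h)$ of a non-maximal history has strictly smaller height, so this yields a well-founded induction.

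For the base case, take $h\in H_{\max}$. By definition $V(h,u)=u_{\tau(h)}$ and $V(h,v)=v_{\tau(h)}$, and $u\le v$ gives $u_{\tau(h)}\le v_{\tau(h)}$, hence $V(h,u)\le V(h,v)$. For the inductive step, take $h\in H\setminus H_{\max}$. First note $\Hsucc(h)\ne\emptyset$: since $h$ is not maximal it is a proper prefix of some $h''\in H$, and the prefix-closure property of $H$ puts the length-$(|h|+1)$ prefix of $h''$, which has the form $\join(h,m)$ for some $m\in M$, into $H$. By the induction hypothesis, $V(h',u)\le V(h',v)$ for every $h'\in\Hsucc(h)$. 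If $p(h)=1$, then $V(h,\cdot)=\max_{h'\in\Hsucc(h)}V(h',\cdot)$, and monotonicity of $\max$ over a fixed finite index set gives $V(h,u)\le V(h,v)$; if $p(h)=-1$, then $V(h,\cdot)=p(h)\max_{h'\in\Hsucc(h)}\bigl(p(h)V(h',\cdot)\bigr)=\min_{h'\in\Hsucc(h)}V(h',\cdot)$, and the same argument applied to $\min$ gives $V(h,u)\le V(h,v)$. This closes the induction.

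There is essentially no obstacle here; the only points requiring a moment's care are verifying that the height is a legitimate well-founded induction parameter (finiteness of $H$) and that $\Hsucc(h)\ne\emptyset$ for non-maximal $h$ (so that the $\max$/$\min$ in the definition of $V$ is over a nonempty finite set), both of which follow immediately from the stated structural properties of $H$. Note that the restriction $u,v\in[0,1]^L$ is never used — the statement holds verbatim for all $u\le v$ in $\R^L$ — so this lemma in particular establishes \cref{ass:ass1}\eqref{ass:ass1:mon} for the minimax reward map $f_k(\mu)=V((k),\mu)$, which is the use we will make of it in the next step.
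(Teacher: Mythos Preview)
Your proof is correct and follows essentially the same approach as the paper: both induct on the distance of $h$ from the terminating states (the paper defines this via $c(h)=0$ for $h\in H_{\max}$ and $c(h)=1+\max\{c(h'):h'\in\Hsucc(h)\}$ otherwise, which is exactly your height), with the same base case. The only cosmetic difference is in the inductive step: you invoke monotonicity of $\max$/$\min$ over the full successor set directly, whereas the paper argues via the optimal move $m(h,\cdot)$, writing $V(h,u)=V(\join(h,m(h,u)),u)\le V(\join(h,m(h,u)),v)\le V(\join(h,m(h,v)),v)=V(h,v)$; both arguments are equivalent and equally short.
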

	From this result we immediately get the following corollary:
	\begin{corollary}\label{cor:mon}
		For $u,v \in [0,1]^L$ such that $u\le v$, $f(u) \le f(v)$, hence \cref{ass:ass1} \eqref{ass:ass1:mon}  holds.
	\end{corollary}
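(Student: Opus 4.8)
The plan is to induct on a measure of ``distance from the terminating states.'' For $h\in H$, note first that since $H$ is finite and prefix-closed, every non-maximal history has at least one immediate successor in $H$: if $h$ is a prefix of some longer $h'\in H$, then $\join(h,m)\in H$ for the appropriate next move $m$, so $\Hsucc(h)\neq\emptyset$. Consequently, starting from any $h$ and repeatedly passing to a successor must reach a maximal history after finitely many steps, so we may define $d(h)$ to be the length of the longest chain $h=h^{(0)},h^{(1)},\dots,h^{(k)}$ with $h^{(i+1)}\in\Hsucc(h^{(i)})$. Then $d(h)$ is a finite nonnegative integer, $d(h)=0$ iff $h\in H_{\max}$, and $d(h')<d(h)$ for every $h'\in\Hsucc(h)$. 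I will prove the claim by induction on $d(h)$, for all $h\in H$ simultaneously.

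For the base case $d(h)=0$ we have $h\in H_{\max}$, so by definition $V(h,u)=u_{\tau(h)}\le v_{\tau(h)}=V(h,v)$, using $u\le v$.

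For the inductive step, suppose $d(h)=n\ge 1$ and that the claim holds for all histories of distance strictly less than $n$; in particular $V(h',u)\le V(h',v)$ for every $h'\in\Hsucc(h)$. Recall $V(h,\mu)=p(h)\max\{\,p(h)V(h',\mu)\,:\,h'\in\Hsucc(h)\,\}$. If $p(h)=1$ this is $V(h,\mu)=\max_{h'\in\Hsucc(h)}V(h',\mu)$, and since the maximum over a fixed nonempty finite index set is monotone in each argument, $V(h,u)\le V(h,v)$. If $p(h)=-1$ this is $V(h,\mu)=\min_{h'\in\Hsucc(h)}V(h',\mu)$, and the minimum is likewise monotone, so again $V(h,u)\le V(h,v)$. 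This completes the induction, and \cref{cor:mon} follows by taking $h=(k)$ for each $k\in[K]$.

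I do not anticipate a real obstacle; the only point needing a little care is the set-up — verifying that $d$ is well-defined and finite (which uses finiteness and prefix-closedness of $H$ together with the fact that non-maximal histories have successors) and that $\Hsucc(h)\neq\emptyset$ so that the $\max$/$\min$ are taken over a nonempty finite set. Once that is in place, the inductive step is just the elementary fact that $\max$ and $\min$ over a fixed finite index set are monotone, combined with the observation that composing with the sign $p(h)\in\{-1,+1\}$ turns one into the other while preserving monotonicity.
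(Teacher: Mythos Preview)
Your proposal is correct and follows essentially the same approach as the paper: induction on the ``distance from terminating states'' (your $d(h)$ is exactly the paper's $c(h)=1+\max\{c(h'):h'\in\Hsucc(h)\}$ with $c(h)=0$ for $h\in H_{\max}$), with the same base case and an equivalent inductive step. The only cosmetic difference is that the paper writes out the inductive step via the optimal move $m(h,\cdot)$ rather than invoking monotonicity of $\max/\min$ directly, and it omits the well-definedness discussion you include.
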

			\begin{wrapfigure}{r}{0.5\textwidth}
				\vspace*{-0.55cm}
				\begin{minipage}{.49\textwidth}
					\begin{algorithm}[H] 
						\caption{MinMax.}
						\label{alg:MinMax}
						\begin{algorithmic}
							\STATE {\bfseries Inputs:} $h\in H$, $u,v\in \R^L$.
							\IF{$h \in H_{\max}$}
							\RETURN $h$
							\ELSIF{$p(h) =-1$}
							\STATE{$h \gets \join(h,m(h,u))$}
							\ELSIF{ $p(h) =1$ }
							\STATE $h \gets \join(h, m(h,v))$
							\ENDIF
							\RETURN  $\MinMax(h,u,v)$
						\end{algorithmic}
					\end{algorithm}
				\end{minipage}
			\end{wrapfigure}
	For $j\in [K]$, $u,v\in \R^L$, $u\le v$,
	per Property \eqref{ass:ass1:cov} of \cref{ass:ass1},
	we need to show that the sets $D(j,u,v)$ are nonempty.
	For a history $h = (m_1, m_2, \ldots, m_\ell) \in H$ and $1\le k \le \ell$, we denote its length-$k$ prefix $(m_1, \dots,m_k)$ by  $h_k$.
	We give an algorithmic demonstration, which also shows how to efficiently pick an element of these sets.
	The resulting algorithm is called
	$\MinMax$ (cf. \cref{alg:MinMax}).
	We define $\MinMax$ in a recursive fashion:
	For each nonmaximal history the algorithm extends the history by adding the move which is optimal
	for $u$ for minimizing moves, while it extends it by adding the optimal move for $v$ for maximizing moves,
	and then it calls itself with the new history.
	The algorithm returns when its input is a maximal history.
	To show that
	$\tau(\MinMax(h,u,v)) \in D(j,u,v)$ we have the following result:
	\begin{restatable}{lemma}{lemSubsetInterval}
		\label{lem:SubsetInterval}
		Fix $u,v\in \R^L$, $u \le v$, and
		$j\in [K]$.
		Let $h = \MinMax( (j) )$ and in particular let
		$h = (m_1 = j, m_2,\dots,m_\ell)$. 
		Then, for all $1\le k < \ell$,
		\[
		[V(h_k, u) ,V(h_k, v)] \subset [V(h_{k+1}, u) ,V(h_{k+1}, v)]\,,
		\]
		where $h_k$ is the length-k prefix of $h$.
	\end{restatable}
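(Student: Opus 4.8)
The plan is to fix an index $k$ with $1 \le k < \ell$ and prove the two endpoint inequalities $V(h_{k+1},u) \le V(h_k,u)$ and $V(h_k,v) \le V(h_{k+1},v)$ that together are equivalent to the claimed inclusion; the statement then follows since $k$ was arbitrary. Everything reduces to a case split on the player to move at $h_k$, so first I would record two elementary facts. (1) Because $k < \ell$ and $\MinMax$ returns only when its argument lies in $H_{\max}$, the history $h_k$ is nonmaximal; hence $V(h_k,\cdot)$ is the min (if $p(h_k)=-1$) or max (if $p(h_k)=1$) of $V(h',\cdot)$ over $h' \in \Hsucc(h_k)$, and, by construction of $\MinMax$, $h_{k+1}\in \Hsucc(h_k)$. (2) By the definition of $m(h,\cdot)$ as the move realizing the optimal immediate successor, $V(\join(h_k,m(h_k,w)),w) = V(h_k,w)$ for a nonmaximal $h_k$ and either valuation $w\in\{u,v\}$, where ``optimal'' means minimizing when $p(h_k)=-1$ and maximizing when $p(h_k)=1$.

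For the case $p(h_k)=-1$: $\MinMax$ picks $h_{k+1}=\join(h_k,m(h_k,u))$, so by fact (2) $V(h_{k+1},u)=\min_{h'\in\Hsucc(h_k)}V(h',u)=V(h_k,u)$, giving the first inequality with equality; and since $h_{k+1}\in\Hsucc(h_k)$, $V(h_k,v)=\min_{h'\in\Hsucc(h_k)}V(h',v)\le V(h_{k+1},v)$, which is the second inequality. The case $p(h_k)=1$ is symmetric: $h_{k+1}=\join(h_k,m(h_k,v))$ gives $V(h_{k+1},v)=\max_{h'\in\Hsucc(h_k)}V(h',v)=V(h_k,v)$, the second inequality, while $V(h_k,u)=\max_{h'\in\Hsucc(h_k)}V(h',u)\ge V(h_{k+1},u)$ gives the first. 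In both cases $[V(h_k,u),V(h_k,v)]\subset[V(h_{k+1},u),V(h_{k+1},v)]$; the left-hand set is a genuine (nonempty) interval by \cref{lem:monotonicpartialorder}, and the inclusion holds trivially if it is empty.

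I do not anticipate a genuine obstacle here: the whole point is that $\MinMax$ is designed so that the move it commits to at a node is the optimal one for exactly the valuation ($u$ at minimizing nodes, $v$ at maximizing nodes) whose value must be held fixed there, so passing from $h_k$ to $h_{k+1}$ leaves one endpoint of the value interval unchanged and moves the other outward. The only points requiring care are verifying that $h_k$ is nonmaximal for every $k<\ell$ (so that the recursive characterization of $V$ applies) and correctly matching the two branches of the definition of $m(h,\cdot)$ to the min/max cases; both are immediate from the definitions.
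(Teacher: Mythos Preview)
Your proposal is correct and follows essentially the same argument as the paper's proof: fix $k$, split on $p(h_k)$, and use that $\MinMax$ selects the move optimal for the valuation whose endpoint must be preserved, so one endpoint is unchanged while the other only moves outward. The paper simply handles one case and invokes symmetry (WLOG $p(h_k)=1$), whereas you write out both cases, but the substance is identical.
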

	We immediately get that $i = \tau(h)$ is an element of $D(j,u,v)$:
	\begin{corollary}
		For $j,u,v,h$ as in the previous result, setting $i = \tau(h)$,
		$[f_j(v), f_j(v)]\subset [u_i, v_i)]$, hence $i\in D(j,u,v) \ne \emptyset$.
	\end{corollary}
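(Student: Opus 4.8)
The plan is to unwind the recursion that defines $\MinMax$ and then perform a case split on the player to move at each internal node along the returned path. First I would record that the prefixes $h_1 = (j), h_2, \dots, h_\ell = h$ are precisely the histories visited by the recursive calls of $\MinMax((j))$, that $h_k \notin H_{\max}$ for every $1 \le k < \ell$ (otherwise the recursion would have terminated earlier), and that, by the construction of the algorithm, $h_{k+1} = \join(h_k, m(h_k, u))$ when $p(h_k) = -1$ and $h_{k+1} = \join(h_k, m(h_k, v))$ when $p(h_k) = 1$.

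Next I would fix $k$ with $1 \le k < \ell$ and reduce the claimed inclusion $[V(h_k,u), V(h_k,v)] \subset [V(h_{k+1},u), V(h_{k+1},v)]$ to the two one-sided inequalities $V(h_{k+1}, u) \le V(h_k, u)$ and $V(h_k, v) \le V(h_{k+1}, v)$; these are exactly what an inclusion of intervals requires, and the intervals themselves are nonempty by \cref{lem:monotonicpartialorder} since $u \le v$. Both inequalities then follow directly from the definition of $V$ at an internal node together with the optimality property of $m(\cdot,\cdot)$. If $p(h_k) = 1$, then $V(h_k, v) = \max_{h' \in \Hsucc(h_k)} V(h', v)$ and $m(h_k, v)$ attains this maximum, so $V(h_{k+1}, v) = V(h_k, v)$; since also $h_{k+1} = \join(h_k, m(h_k,v)) \in \Hsucc(h_k)$, we get $V(h_{k+1}, u) \le \max_{h' \in \Hsucc(h_k)} V(h', u) = V(h_k, u)$. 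The case $p(h_k) = -1$ is symmetric, with $\min$ in place of $\max$: $m(h_k, u)$ attains $\min_{h' \in \Hsucc(h_k)} V(h', u)$, so $V(h_{k+1}, u) = V(h_k, u)$, while $V(h_{k+1}, v) \ge \min_{h' \in \Hsucc(h_k)} V(h', v) = V(h_k, v)$.

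There is no genuine obstacle in this argument: once the recursion is unwound, each inequality is an immediate consequence of a definition. The only thing to be careful about is the bookkeeping — that minimizing nodes follow $u$ while maximizing nodes follow $v$, and that the $p(h)\max\{p(h)\,\cdot\,\}$ encoding of $V$ really is a minimum at minimizing nodes — so I would keep the two cases cleanly separated and spell out which vector drives each move choice. The corollary that follows (that $i = \tau(h) \in D(j,u,v)$) is then obtained by chaining the inclusions from $k=1$ to $k=\ell-1$ and using $V(h_\ell, \mu) = \mu_{\tau(h)}$ together with $f_j(\mu) = V((j),\mu)$.
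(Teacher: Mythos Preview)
Your proposal is correct and follows essentially the same route as the paper: the paper's proof of \cref{lem:SubsetInterval} does exactly the case split on $p(h_k)$ that you describe (using optimality of $m(h_k,\cdot)$ to get equality on one side and the $\max$/$\min$ to get the inequality on the other), and the corollary is then stated as immediate from chaining the nested intervals and reading off the endpoints $f_j(\cdot)=V(h_1,\cdot)$ and $u_i,v_i=V(h_\ell,\cdot)$. The only difference is that you re-derive the lemma's inequalities inline rather than simply citing the preceding result, which is harmless.
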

	With this, we have shown that all the assumptions needed by \cref{thm:upperbound} are satisfied, and in particular,
	we can use $I_t = \MinMax( \topone_t, L_t^\delta, U_t^\delta)$ and $J_t = \MinMax( \toptwo_t, L_t^\delta, U_t^\delta)$.
	We call the resulting algorithm \LUCBMinMax. Then, \cref{thm:upperbound}  gives:
	\begin{corollary}
		\label{cor:upperboundMT}
		Let $\xi,c, \Delta$, $H(\mu), t^*$ be as in \cref{sec:ub}.
		If $T$ is the stopping time of  \LUCBMinMax running on a minimax game search problem
		then $T\le t^*$.
	\end{corollary}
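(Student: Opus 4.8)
The plan is to recognize \LUCBMinMax as a concrete instantiation of the generic LUCB-micro algorithm of \cref{sec:ub} and then to invoke \cref{thm:upperbound}. First I would check that the three standing assumptions of that section hold in the minimax setting: \cref{ass:subgauss} holds by hypothesis; \cref{ass:ass1}\eqref{ass:ass1:mon} (monotonicity of $f$) is exactly \cref{cor:mon}; and \cref{ass:ass1}\eqref{ass:ass1:cov} (non-emptiness of the covering sets $D(j,u,v)$) is the corollary following \cref{lem:SubsetInterval}, which moreover provides the explicit witness $\tau(\MinMax((j),u,v)) \in D(j,u,v)$ valid for every $u\le v$ and $j\in[K]$.

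Next I would verify that \LUCBMinMax executes a legitimate run of LUCB-micro, i.e., that at each round $t$ the selected pair lies in $D_t(B_t)\times D_t(C_t)$. Since the confidence bounds always satisfy $L_t^\delta \le U_t^\delta$ (chain the two inequalities in \eqref{eq:eventxi2}), the corollary of \cref{lem:SubsetInterval} applied with $u=L_t^\delta$, $v=U_t^\delta$ and $j\in\{B_t,C_t\}$ gives $I_t=\tau(\MinMax(B_t,L_t^\delta,U_t^\delta))\in D_t(B_t)$ and $J_t=\tau(\MinMax(C_t,L_t^\delta,U_t^\delta))\in D_t(C_t)$; the candidate-best / best-contender choice of $B_t,C_t$ and the stopping rule are those of LUCB-micro by construction, and finiteness of $H$ ensures each $\MinMax$ call terminates. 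Hence \LUCBMinMax is a special case of LUCB-micro, so \cref{thm:upperbound} applies verbatim, yielding $T\le t^*(\mu)=t^*$ on the event $\xi$.

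I do not expect a genuine obstacle here: the substantive content — monotonicity, the covering property, and the correctness and sample-complexity analysis — is already contained in the preceding lemmas and in \cref{thm:upperbound}. The only point that warrants a moment of care is the bookkeeping that the $\MinMax$-based selection is admissible for LUCB-micro at every round, namely that $\MinMax$ returns (the index of) an element of the required covering set for the current confidence box $[L_t^\delta, U_t^\delta]$; this is precisely the corollary to \cref{lem:SubsetInterval}, instantiated at the running confidence bounds $L_t^\delta \le U_t^\delta$.
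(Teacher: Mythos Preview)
Your proposal is correct and matches the paper's own argument: the paper simply notes that \cref{cor:mon} and the corollary following \cref{lem:SubsetInterval} verify \cref{ass:ass1}, so \cref{thm:upperbound} applies directly. You are, if anything, more explicit than the paper (which states the result without a separate proof, the justification being the sentence preceding the corollary), in that you spell out why $L_t^\delta \le U_t^\delta$ and why the $\MinMax$ selections land in $D_t(B_t)\times D_t(C_t)$ at every round.
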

	
	When applied to a minimax game, as defined in Section \ref{sec:prob}, the upper bound of \cref{cor:upperboundMT} is loose and can be further improved as shown in the result below.
	To state this result we need some further notation.
	Given a set of reals $S$, define the ``span'' of $S$ as  $\Span(S) = \max_{u,v\in S} u-v$.
	For a path $h\in H$ that connects some move in $[K]$ and some move in $[L]$: $h = (m_1,\dots,m_\ell)$ with some $\ell \ge 0$, $m_1 \in [K]$  and $m_\ell \in [L]$.
	\newcommand{\Vset}{\mathbb{V}}
	Finally, for $i\in [L]$ such that there is a unique path $h \in H$ satisfying $\tau(h) = i$, define $\Vset(i,\mu) = \{ V(h_k,\mu)\,: h =(m_1, \ldots, m_\ell = i),\, m_1\in [K],\, 1 \le k <\ell \}$. Let $\Vset(i,\mu)$ be an empty set if there is multiple $h\in H$ such that $\tau(h) = i$.
	\begin{restatable}[\LUCBMinMax on MinMax Trees]{theorem}{thmMTupperbound}
		\label{thm:MTupperbound}
		Let
		\begin{align*}
		H(\mu) = \sum_{i \in [L]} \min\{ \frac{1}{\Span(\Vset(i, \mu)\cup\{c, \mu_i\})^2}, \frac{4}{\Delta^2}\},
		\quad
		t^*(\mu) = \min\{t\in \natu \,:\,  1+ 8 H(\mu)\beta(t,\delta/(2L)) \le t\}\,.
		\end{align*}
		Then, on $\xi$, the stopping time $T$ of \LUCBMinMax satisfies $T\le t^*(\mu)$.
	\end{restatable}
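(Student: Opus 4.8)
The plan is to revisit the proof of \cref{thm:upperbound} and observe that the only place where the structure of $f$ enters is \cref{lem:lengthofInterval} together with the bound it yields on $N_t(i)$ for $i\in\{I_t,J_t\}$. In the general analysis, whenever $t<T$ and (say) $I_t=i$, we conclude from \cref{lem:lengthofInterval} that the confidence interval on one of the two arms $\{\topone_t,\toptwo_t\}$ still contains $c$ and has width at least $\Delta/2$, hence by \cref{ass:ass1}\eqref{ass:ass1:cov} the interval $[L_t^\delta(i),U_t^\delta(i)]$ on the chosen micro-observable $i$ must contain $[f_J(L_t^\delta),f_J(U_t^\delta)]$, so either its width is at least $\Delta/2$ or it contains $c$ (in fact it contains the relevant arm interval, which contains $c$). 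The key refinement for minimax trees is that, because $I_t=\MinMax(\topone_t,L_t^\delta,U_t^\delta)$ follows the chain of nested intervals from \cref{lem:SubsetInterval}, the interval $[L_t^\delta(i),U_t^\delta(i)]$ does not merely contain $[f_J(L_t^\delta),f_J(U_t^\delta)]$ but contains every intermediate value $V(h_k,L_t^\delta)$ as well — in particular, when $i$ is a leaf with a unique path $h$, it contains all of $\Vset(i,\mu)$ evaluated at the true $\mu$ on the event $\xi$. Combined with the fact that it contains $c$ and $\mu_i$, this forces the width of the interval to be at least $\Span(\Vset(i,\mu)\cup\{c,\mu_i\})$ whenever the algorithm has not stopped and has just sampled $i$.

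Concretely, I would proceed as follows. First, fix $i\in[L]$ and let $t_i$ be the last round before $T$ at which $i$ is sampled (as $I_t$ or $J_t$); if there is no such round, $N_T(i)$ is bounded trivially. On $\xi$, by \cref{lem:SubsetInterval} applied at time $t_i$ and the containment $\mu\in S_{t_i}$, the interval $[L_{t_i}^\delta(i),U_{t_i}^\delta(i)]$ contains $f_J(L_{t_i}^\delta)$ and $f_J(U_{t_i}^\delta)$ for the relevant $J$, and (via the nested chain) contains $V(h_k,\mu)$ for every prefix $h_k$ of the unique path to $i$, as well as $c$ (from \cref{lem:lengthofInterval}) and $\mu_i$ (trivially, on $\xi$). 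Hence its width is at least $\Span(\Vset(i,\mu)\cup\{c,\mu_i\})$. On the other hand, by the anytime concentration bound (\cref{lem:anyconc}) and \cref{eq:eventxi1,eq:eventxi2}, the width of the interval after $N_{t_i}(i)$ samples is at most $2\sqrt{2\beta(N_{t_i}(i),\delta/(2L))/N_{t_i}(i)}$. Rearranging, $N_{t_i}(i)\le 8\beta(N_{t_i}(i),\delta/(2L))/\Span(\Vset(i,\mu)\cup\{c,\mu_i\})^2$. Combining with the cruder bound $N_{t_i}(i)\le 8\beta(\cdot,\cdot)\cdot 4/\Delta^2$ that always holds (from the $\Delta/2$ width argument, exactly as in the proof of \cref{thm:upperbound}), we get $N_{t_i}(i)\le 8\beta(N_{t_i}(i),\delta/(2L))\min\{\Span(\Vset(i,\mu)\cup\{c,\mu_i\})^{-2},4\Delta^{-2}\}$. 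Summing over $i\in[L]$, using $T-1=\tfrac12\sum_i N_{T-1}(i)\le\sum_i N_{t_i}(i)$ up to a harmless $\pm 1$ (two micro-observations per round), and monotonicity of $\beta$ in $t$, yields $T\le 1+8H(\mu)\beta(T,\delta/(2L))$, which by definition of $t^*(\mu)$ gives $T\le t^*(\mu)$.

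The main obstacle I anticipate is the bookkeeping around the degenerate cases: when $i$ has multiple preimages under $\tau$ (transpositions) the set $\Vset(i,\mu)$ is declared empty, so the per-leaf bound must gracefully degrade to the old $4/\Delta^2$ bound — this is why the $\min$ with $4/\Delta^2$ is present, and one must check that the argument above never actually needs the nested-interval refinement in that case. A second delicate point is making the "last sample of $i$ before $T$" argument rigorous in the presence of the $\max/\min$ clipping in the definitions of $L_t^\delta,U_t^\delta$: the clipping only makes intervals \emph{narrower}, so the width upper bound from \cref{lem:anyconc} still applies, and the width lower bound from the span argument uses only that the (clipped) interval still contains the relevant values on $\xi$, which is guaranteed by \cref{eq:eventxi2} and \cref{lem:SubsetInterval}. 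Finally, one should double-check that $I_t=\MinMax(\topone_t,\cdot,\cdot)$ and $J_t=\MinMax(\toptwo_t,\cdot,\cdot)$ is indeed the instantiation used by \LUCBMinMax (it is, per \cref{cor:upperboundMT}), so that \cref{lem:SubsetInterval} applies with $h=\MinMax((j))$ at the right arguments.
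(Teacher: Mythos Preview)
Your key insight is correct and matches the paper exactly: on $\xi$, whenever $i\in\{I_t,J_t\}$ is selected via $\MinMax$, the chain of nested intervals from \cref{lem:SubsetInterval} combined with \cref{lem:monotonicpartialorder} yields $\Vset(i,\mu)\cup\{\mu_i\}\subset[L_t^\delta(i),U_t^\delta(i)]$. This is precisely the refinement that turns \cref{thm:upperbound} into \cref{thm:MTupperbound}.

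However, the ``last sample time $t_i$'' bookkeeping has a genuine gap. At round $t_i$, \cref{lem:lengthofInterval} guarantees that $c\in[f_{J'}(L_{t_i}^\delta),f_{J'}(U_{t_i}^\delta)]$ and that the arm-level width is at least $\Delta/2$ for \emph{some} $J'\in\{\topone_{t_i},\toptwo_{t_i}\}$; but nothing forces this $J'$ to coincide with the arm $J$ for which $\MinMax(J,\cdot,\cdot)=i$. If $J'\ne J$, then $c$ lands in the confidence interval of the \emph{other} micro-observable in $\{I_{t_i},J_{t_i}\}$, not in $[L_{t_i}^\delta(i),U_{t_i}^\delta(i)]$. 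The same objection applies to your fallback ``cruder $4/\Delta^2$ bound'': it too comes from \cref{lem:lengthofInterval} and therefore only controls the interval of the micro-observable attached to $J'$. So neither per-leaf bound on $N_{t_i}(i)$ is justified at the last sample of $i$.

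The paper sidesteps this by retaining the per-round summation structure from the proof of \cref{thm:upperbound}: for each $t<T$ there \emph{exists} $I\in\{I_t,J_t\}$ with $S(I)\defeq\Vset(I,\mu)\cup\{c,\mu_I\}\subset[L_t^\delta(I),U_t^\delta(I)]$ (the $c$ inclusion coming from \cref{lem:lengthofInterval} applied to the lucky arm, the $\Vset$ inclusion from the nested-interval argument), hence
\[
\one{t<T}\;\le\;\sum_{i\in[L]}\one{i\in\{I_t,J_t\}}\,\one{N_t(i)\le 8\beta(N_t(i),\delta/(2L))/\Span(S(i))^2}\,,
\]
and summing over $t\le\tau$ then swapping the order of summation handles the existential quantifier correctly. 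Your argument is easily repaired by reverting to this summation; the substantive part of your analysis (the span containment via nested intervals, and your correct treatment of the transposition case where $\Vset(i,\mu)=\emptyset$) is unchanged.
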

	\begin{remark}
		\label{rmk:upperboundMinimax1}
		Note that this result recovers Theorem 1 of \citet{garivier2016maximin}. To see this note that for every leaf $(i,j)$ (as numbered in their paper), $\mu_{i,1} \in \Vset((i,j),\mu)$. Also note that $\mu_{i,1} \le \mu_{i,j}$, thus $|c - \mu_{i,j}| \le \max\{|c - \mu_{i,1}|, |\mu_{i,j} - \mu_{i,1}| \} $. Therefore, $\Span(\{\mu_{i,1}, \mu_{i,j}, c\}) = \max\{ |\mu_{i,1} - c|, |\mu_{i,j} - \mu_{i,1}| \}$.
	\end{remark}
	
	\section{Discussion and Conclusions}
	\label{sec:discussion}
	\paragraph{ The gap between the lower bound and the upper bound}
	There is a substantial gap between the lower and the upper bound. Besides the gaps that already exist in the multi-armed bandit setting
	and which have been mentioned before, there exists a substantial gap:
	In particular, it is not hard to show that in regular minimax game trees with a fixed branching factor of $\kappa$ and depth $d$,
	the upper bound scales with $O(\kappa^d)$ while the lower bound scales with $O(\kappa^{d/2})$. 
	One potential is to improve the lower bound so as to consider adversarial perturbations of the values assigned to the leaf nodes:
	That is, after the algorithm is fixed, an adversary can perturb the values of $\mu$ to maximize the lower bound.
	\citet{SimJaRe17} introduces an interesting technique for proving lower bounds of this form and they demonstrate nontrivial improvements
	in the multi-armed bandit setting.

	\begin{figure}
		\centering
		\begin{minipage}{\textwidth}
			\centering
			\includegraphics{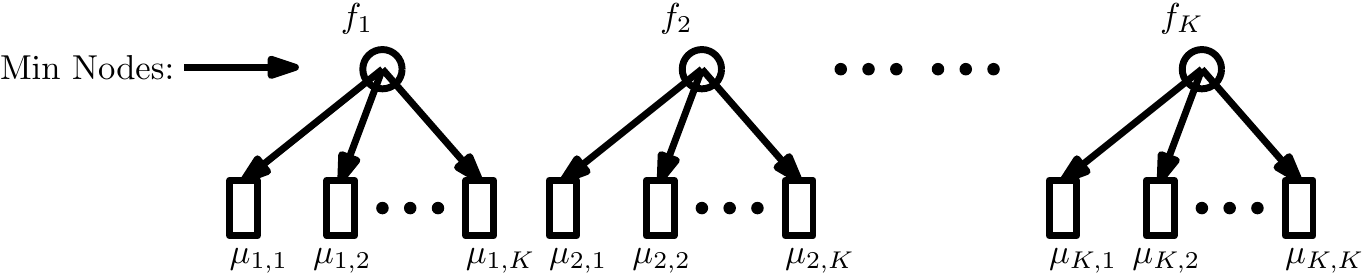}
			\caption{A 2-layer minimax tree.}
			\label{ex:example2}
		\end{minipage}\\
	\end{figure}
	\paragraph{Does the algorithm need to explore all leaves?}
	The hardness measure $H(\mu)$ is rooted in a uniform bound that suggests that all the leaves must potentially be pulled, which may not hold for some particular structure.
	In particular, the algorithm may be able to benefit from the specific structure of $f$, saving explorations on some leaves.
	We present one example when $f$ is a minimax game tree, as in Figure \ref{ex:example2}.
	Assume that $\mu_{1,i} = \mu^* >> \mu^{**} = \mu_{j,i}$ for $1\le i\le K$ and $2\le j\le K$.
	A reasonable algorithm would sample each arm once, then discover that the others arms are much less than the sampled leaf under arm 1. Then the algorithm will continue to explore the other leaves of arm 1, and decide arm 1 to be the best arm.
	This behavior is also in agreement with  our lower bound, where the resulting constraints are:
	\[
	N_{1,i} (\mu^* - \mu^{**})^2 \ge 2\log(1/4\delta); \quad  \sum_{i=1}^{K} N_{j,i} (\mu^* - \mu^{**})^2 \ge 2\log(1/4\delta)\quad \forall j\neq 1,
	\]
	which implies $N_{1,i}\ge 1$ and $\sum_{i=1}^{K} N_{j,i} \ge 1$ for $j\neq 1$ if $\mu^* - \mu^{**}$ is large enough. As we can see from this example, $(K-1)^2$ (out of $K^2$) leaves need no exploration at all. On the other hand, although we don't have a tight upper bound, our algorithm in practice manages to explore the remaining $K-1$ leaves under arm 1 for the next $K-1$ rounds, and then make the right decision.
	\begin{wrapfigure}{r}{0.5\textwidth}
		\begin{minipage}{0.495\textwidth}
			\includegraphics[height = 3.5cm]{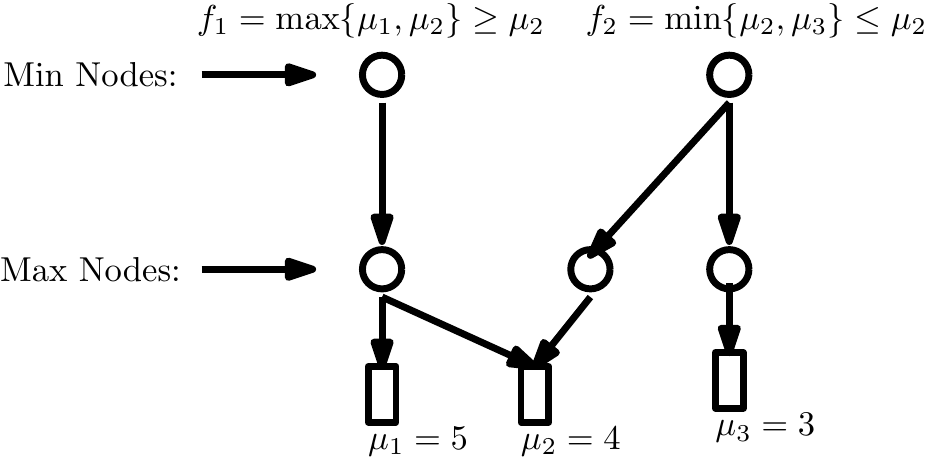}
			\caption{An example that needs no exploration.}
			\label{ex:example3}
		\end{minipage}
	\end{wrapfigure}
	In general, we would expect that a problem with a feed forward neural network structure is easier than that of a tree structure, as the share of the leaves provides more information and thus save the exploration.
	This is illustrated on \cref{ex:example3}, where an optimal arm can be identified solely based on the network structure,
	thus the algorithm requires {\bf 0} sample for all possible $\mu$.
	Note that our lower bound does not fail, as we have $D_{\mu} = \emptyset$ here.

	\begin{appendices}
		\section{The Uniqueness Assumption}
		Recall that throughout the paper, by definition, we have the following assumption:
		\begin{assumption}
			\label{ass:uniqueness}
			The instance is such that $j^*(\mu) = \argmax_j f_j(\mu)$ is unique.
		\end{assumption}
		We state this assumption explicitly here, so that we can refer to it easily throughout the appendix.
		\section{Proofs for \cref{sec:lb}}
		\label{sec:lbproof}
		Here we prove \cref{thm:LB}, which is restated for the convenience of the reader:
		\thmLB*
		
		We start with the two information theoretic results mentioned in the main body of the text.
		To state these results, let $\KL(P,Q)$ denote the Kullback--Leibler (KL) 
		divergence of two distributions $P$ and $Q$.
		Recall that this is $\int \log(\frac{dP}{dQ}) dQ$ when $P$ is absolutely continuous with respect to $Q$ and is infinite otherwise.
		For the next result, let $N(i) = \sum_{t=1}^T \one{I_t=i}$ denote the number of times an observation on the micro-observable with index $i\in [L]$ before time $T$.
		\begin{restatable}[Divergence decomposition]{lemma}{lemdivdec}
			\label{lem:divdec}
			For any $\mu,\mu'\in \R^L$ it holds that 
			\begin{align}
			\KL(\PmuA, \PmupA) = \frac12 \sum_{i=1}^L \EmuA[N(i)] \, (\mu_i-\mu'_i)^2\,.
			\label{eq:infoproc}
			\end{align}
		\end{restatable}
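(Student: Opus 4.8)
The plan is to telescope the relative entropy round by round via the chain rule, using that $\PmuA$ and $\PmupA$ differ only through the observation laws and that these laws coincide — both are the point mass at $0$ — in every round past the stopping time $T$.

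First I would fix the filtration $\cF_t = \sigma(I_1,Y_1,\dots,I_t,Y_t)$ for $t\ge 1$, with $\cF_0$ trivial, and note that $\cF$ is generated (up to completion, which is immaterial for relative entropy) by $\bigcup_t \cF_t$. Since $A$ is deterministic, $I_t$ is a deterministic, hence $\cF_{t-1}$-measurable, function of the history; and since $T$ is an $(\cF_t)$-stopping time, $\{T\ge t\}\in\cF_{t-1}$. Hence, conditionally on $\cF_{t-1}$, the law of $(I_t,Y_t)$ is identical under the two measures except for the law of $Y_t$ given $I_t$, which under $\PmuA$ is $\mathcal{N}(\mu_{I_t},1)$ on $\{T\ge t\}$ and the point mass at $0$ on $\{T<t\}$ (and analogously, with $\mu'$, under $\PmupA$). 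Using the closed form $\KL(\mathcal{N}(a,1),\mathcal{N}(b,1)) = \tfrac12(a-b)^2$, together with the fact that the two round-$t$ kernels coincide on $\{T<t\}$ and so contribute nothing there, the conditional relative entropy of the round-$t$ transition kernels equals $\one{T\ge t}\,\tfrac12(\mu_{I_t}-\mu'_{I_t})^2$, which is $\cF_{t-1}$-measurable.

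Next I would apply the finite-horizon chain rule for relative entropy to the restricted measures $\PmuA|_{\cF_n}$ and $\PmupA|_{\cF_n}$, obtaining $\KL(\PmuA|_{\cF_n},\PmupA|_{\cF_n}) = \tfrac12\,\EmuA\bigl[\sum_{t=1}^{n\wedge T}(\mu_{I_t}-\mu'_{I_t})^2\bigr]$, where I used $\sum_{t=1}^n\one{T\ge t}X_t = \sum_{t=1}^{n\wedge T}X_t$. Letting $n\to\infty$: the left-hand side increases to $\KL(\PmuA,\PmupA)$, since the relative entropies of restrictions to an increasing sequence of sub-$\sigma$-algebras converge to the relative entropy on the generated $\sigma$-algebra; and by monotone convergence the right-hand side tends to $\tfrac12\,\EmuA\bigl[\sum_{t=1}^{T}(\mu_{I_t}-\mu'_{I_t})^2\bigr]$. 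Finally I would group the summands by the value of $I_t$, rewriting $\sum_{t=1}^T(\mu_{I_t}-\mu'_{I_t})^2 = \sum_{i=1}^L N(i)(\mu_i-\mu'_i)^2$, and push the expectation through this finite sum (Tonelli, all terms nonnegative) to conclude $\KL(\PmuA,\PmupA)=\tfrac12\sum_{i=1}^L\EmuA[N(i)](\mu_i-\mu'_i)^2$.

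The one delicate point is the passage $n\to\infty$ in the presence of the possibly infinite random stopping time $T$: the chain rule must be run at the level of the finite-horizon restrictions and both sides must be shown to converge, rather than manipulating an infinite sum of conditional divergences directly. This is precisely where the convention of answering with deterministic zeros after round $T$ is used — it makes the round-$t$ conditional divergence vanish on $\{T<t\}$, so that no assumption on the algorithm's behaviour after stopping, nor even finiteness of $T$, enters, and the whole argument reduces to monotone convergence.
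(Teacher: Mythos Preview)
Your proof is correct and is exactly the standard argument the paper has in mind; in fact the paper omits the proof entirely, saying only that it ``is standard and hence is omitted'' while remarking that the convention of forcing identical deterministic behaviour after $T$ is what keeps the left-hand side finite. Your proposal fills in precisely this omitted argument, and your closing paragraph explicitly identifies the same technical point the paper highlights.
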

		Note that $\frac12(\mu_i-\mu'_i)^2$ on the right-hand side is the KL divergence between the normal distributions with means $\mu_i$ and $\mu_i'$ and both having a unit variance. The result, naturally, holds for other distributions, as well.
		This is the result that relies strongly on that we forced the same observations and same observation-choices for $t>T$.
		In particular, this is what makes the left-hand side of \eqref{eq:infoproc} finite! The proof is standard and hence is omitted.
		
		\begin{restatable}[High probability Pinsker, e.g., Lemma 2.6 from \citep{Tsy08:NonpBook}]{lemma}{lempinsker}
			\label{lem:pinsker}
			Let $P$ and $Q$ be probability measures on the same measurable space $(\Omega,\mathcal{F})$ and let $E\in \mathcal{F}$ be an arbitrary event.
			Then,
			\begin{equation*}
			P(E) + Q(E^{c})\geq\frac{1}{2}\exp(-\KL(P,Q))\,.
			\end{equation*}
		\end{restatable}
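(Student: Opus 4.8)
The plan is to prove this via two elementary moves: reduce the left-hand side to the ``overlap mass'' of the two densities, then lower-bound that overlap using Cauchy--Schwarz together with Jensen's inequality, which is where the KL divergence enters. First I would dispose of the trivial case: if $\KL(P,Q) = \infty$ (in particular if $P$ is not absolutely continuous with respect to $Q$) the right-hand side is $0$ and there is nothing to prove, so I assume $\KL(P,Q) < \infty$. Fix a common dominating measure $\nu$ (e.g. $\nu = \tfrac12(P+Q)$) and let $p = dP/d\nu$, $q = dQ/d\nu$; finiteness of $\KL$ ensures $p>0$ and $q>0$ hold $P$-almost surely, so the Radon--Nikodym manipulations below are legitimate.

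\emph{Step 1 (reduce to the overlap integral).} For the given event $E$, since $p \ge p\wedge q$ and $q \ge p\wedge q$ pointwise,
\[
P(E) + Q(E^c) = \int_E p\,d\nu + \int_{E^c} q\,d\nu \;\ge\; \int_E (p\wedge q)\,d\nu + \int_{E^c}(p\wedge q)\,d\nu \;=\; \int (p\wedge q)\,d\nu ,
\]
so it suffices to show $\int (p\wedge q)\,d\nu \ge \tfrac12 \exp(-\KL(P,Q))$. \emph{Step 2 (Cauchy--Schwarz).} Writing $\sqrt{pq} = \sqrt{p\wedge q}\,\sqrt{p\vee q}$ pointwise and applying Cauchy--Schwarz, together with $\int(p\vee q)\,d\nu \le \int(p+q)\,d\nu = 2$,
\[
\Bigl(\int \sqrt{pq}\,d\nu\Bigr)^{2} \;\le\; \int (p\wedge q)\,d\nu \cdot \int (p\vee q)\,d\nu \;\le\; 2\int (p\wedge q)\,d\nu .
\]
\emph{Step 3 (Jensen: affinity versus KL).} On $\{p>0\}$ we have $\sqrt{pq}/p = \exp\bigl(-\tfrac12\log(p/q)\bigr)$, so the Hellinger affinity equals $\int\sqrt{pq}\,d\nu = \E_P\bigl[\exp(-\tfrac12\log(dP/dQ))\bigr]$; since $x\mapsto e^{-x/2}$ is convex, Jensen's inequality gives $\int\sqrt{pq}\,d\nu \ge \exp\bigl(-\tfrac12\,\E_P[\log(dP/dQ)]\bigr) = \exp(-\KL(P,Q)/2)$. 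Combining with Step 2 yields $\exp(-\KL(P,Q)) \le 2\int(p\wedge q)\,d\nu$, and Step 1 then completes the argument.

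I do not expect a real obstacle here --- this is a textbook inequality --- so the ``hard part'' is only bookkeeping: justifying the reduction to $\KL(P,Q)<\infty$ so the densities and the Jensen step are well defined, and handling $p\wedge q$/$p\vee q$ carefully (in particular $p\vee q = p + q - p\wedge q$, so $\int(p\vee q)\,d\nu = 2 - \int(p\wedge q)\,d\nu \le 2$). An essentially equivalent route would go through the Bretagnolle--Huber inequality $\|P-Q\|_{\mathrm{TV}}^2 \le 1 - \exp(-\KL(P,Q))$, combined with $1-x^2 \le 2(1-x)$ for $x\in[0,1]$ and the elementary bound $P(E)+Q(E^c) \ge 1 - \|P-Q\|_{\mathrm{TV}}$; since the statement is quoted from \citet{Tsy08:NonpBook} one could alternatively just cite it, but the self-contained argument above is only a few lines.
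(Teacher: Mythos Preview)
Your proof is correct and self-contained; the paper, however, does not actually prove this lemma --- it simply states it with a citation to Lemma~2.6 of \citet{Tsy08:NonpBook} and uses it as a black box in the proof of \cref{thm:LB}. Your three-step argument (overlap integral, Cauchy--Schwarz via $\sqrt{pq}=\sqrt{p\wedge q}\,\sqrt{p\vee q}$, then Jensen on the Hellinger affinity) is in fact the standard textbook route and matches what one finds in Tsybakov, so there is nothing to compare beyond noting that you supplied a proof where the paper only gave a reference.
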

		\begin{proof}[of \cref{thm:LB}]
			WLOG, we may assume that $D_\mu^\circ$ is non-empty.
			Pick any $\Delta\in D_\mu^\circ$ and let $\mu' = \mu+\Delta$.
			Let $E = \{ J \ne 1\}$.
			Since $A$ is admissible, $\PmuA(E)\le \delta$.
			Further, since $\Delta\in D_\mu^\circ$, $1$ is not an optimal arm in $\mu'$.
			Hence, again by the admissibility of $A$, $\PmupA(E^c)\le \delta$.
			Therefore, by \cref{lem:pinsker},
			\begin{align*}
			2\delta \ge \PmuA(E)+\PmupA(E^c) &\ge \frac{1}{2}\exp( -\KL(\PmuA,\PmupA))\,.
			\end{align*}
			Now, plugging in \eqref{eq:infoproc} of \cref{lem:divdec} and reordering we get
			\begin{align*}
			\log(1/(4\delta)) \le \KL(\PmuA,\PmupA) = \frac12 \sum_{i=1}^L \EmuA[N(i)] \, (\mu_i-\mu'_i)^2\,.
			\end{align*}
			The result follows by continuity, after noting that $T = \sum_{i=1}^L N(i)$, that $\Delta\in D_\mu^\circ$ was arbitrary.
		\end{proof}
		
		\section{Proofs for \cref{sec:lb_minmax}}
		\newcommand{\Hprede}{H_{\text{prede}}}
		We start with the following lemma:
		\begin{lemma}
			\label{lem:helper1forProp4}
			Pick any $\mu \in \R^L$, $j\in [K]$. Then,
			\[
			\forall B\in \cB_j^+, f_j(\mu) \le \max\{ \mu_i\,:\, i\in B\};\quad \forall B\in \cB_j^-, f_j(\mu) \ge \min\{ \mu_i\,:\, i\in B\}.
			\] 
		\end{lemma}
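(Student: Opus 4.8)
The plan is to reduce both inequalities to the defining property of proof sets by a single monotonicity step. Consider first the upper-bound statement. Fix $\mu\in\R^L$, $j\in[K]$ and $B\in\cB_j^+$, and set $\theta=\max\{\mu_i\,:\,i\in B\}$. Define $\nu\in\R^L$ by raising every $B$-component of $\mu$ to $\theta$, i.e. $\nu_i=\theta$ for $i\in B$ and $\nu_i=\mu_i$ for $i\notin B$. Since $\mu_i\le\theta$ for each $i\in B$ and $\nu$ agrees with $\mu$ off $B$, we have $\mu\le\nu$. By construction $\nu|_B=\theta\allone_B$, so the definition of $B\in\cB_j^+$ gives $f_j(\nu)\le\theta$. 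Invoking monotonicity of the reward map, $\mu\le\nu$ implies $f_j(\mu)\le f_j(\nu)\le\theta=\max\{\mu_i\,:\,i\in B\}$, which is the first claim.

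The lower-bound statement is proved symmetrically. Given $B\in\cB_j^-$, put $\theta'=\min\{\mu_i\,:\,i\in B\}$ and define $\nu'\in\R^L$ by $\nu'_i=\theta'$ for $i\in B$ and $\nu'_i=\mu_i$ for $i\notin B$; then $\nu'\le\mu$. Since $\nu'|_B=\theta'\allone_B$, the definition of $B\in\cB_j^-$ yields $f_j(\nu')\ge\theta'$, and monotonicity gives $f_j(\mu)\ge f_j(\nu')\ge\theta'=\min\{\mu_i\,:\,i\in B\}$.

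The only step needing a word of justification is the monotonicity used twice above: $u\le v$ implies $f_j(u)\le f_j(v)$. For minimax games this is exactly \cref{lem:monotonicpartialorder} together with $f_j=V((j),\cdot)$; the induction there on distance from the terminal states goes through verbatim with $u,v\in\R^L$ in place of $[0,1]^L$, since $\min$ and $\max$ are monotone on $\R$ (and one may also appeal directly to \cref{ass:ass1}\eqref{ass:ass1:mon} in the abstract formulation). I do not expect any genuine obstacle here — once monotonicity is available the argument is essentially a one-line rewriting of the definition of proof sets — the only mild care is making sure the cited monotonicity is stated, or re-derived, on all of $\R^L$ rather than the unit cube.
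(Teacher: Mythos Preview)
Your proof is correct and essentially identical to the paper's: the paper also sets $u=\max\{\mu_i:i\in B\}$, defines $\mu'$ by lifting the $B$-coordinates of $\mu$ to $u$, and then applies monotonicity (via \cref{cor:mon}) together with the definition of $\cB_j^+$ to conclude $f_j(\mu)\le f_j(\mu')\le u$, leaving the $\cB_j^-$ case to the reader. Your remark that \cref{lem:monotonicpartialorder} is stated on $[0,1]^L$ but carries over verbatim to $\R^L$ is a fair observation; the paper's \cref{cor:mon} is applied without comment on this point.
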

		\begin{proof}
			Fix any $\mu\in \R^L$, $j\in [K]$, $B\in \cB_j^+$.
			Let $u = \max\{ \mu_i\,:\, i\in B\}$. We want to show that $f_j(\mu)\le u$.
			Define $\mu'\ge \mu$ such that $\mu'_i = \mu_i$ if $i\not\in B$, and $\mu_i' = u$ otherwise.
			As noted earlier (cf. \cref{cor:mon}), $f_j$ is monotonous. Hence, $f_j(\mu)\le f_j(\mu') \le u$, where
			the last inequality follows because $B\in \cB_j^+$.
			The proof concerning $\cB_j^-$ is analogous and is left to the reader.
		\end{proof}
		\lemMMLBRev*
		\begin{proof}
			Note that since $d\in D_\mu$, $f_1(\mu+d) \le f_j(\mu+d)$ for some $j\neq 1$. Fix such an index $j$. 
			
			To construct $B$ and $B'$, we will pick  $\tH_1\in \cH_1^+$, $\tH_j\in \cH_j^-$ and set 
			$B = \tau( \tH_1 \cap H_{\max})$ and $B' = \tau( \tH_j \cap H_{\max} )$.
			By the construction of $\cH_1^+$, to pick $\tH_1$ it suffices to specify the unique successor $h'$ in $\tH_1$ of any history $h\in \tH_1$ with $p(h)=-1$. For this, we let $h'\in H$ be the successor for which $V(h',\mu+d) = V(h,\mu+d)$. 
			Similarly, by the construction of $\cH_j^-$, to pick $\tH_j$ it suffices to specify the unique successor $h'$ in $\tH_j$ of any history $h\in \tH_j$ with $p(h)=+1$. Again, we let $h'\in H$ be the successor for which $V(h',\mu+d) = V(h,\mu+d)$. 
			Note that by \cref{prop:LB1}, $B\in \cB_1^+$ and $B'\in \cB_j^-$.
			
			Let us now turn to the proof of \eqref{lem:MMLBRev:1}.
			We start by showing that 
			\begin{align}
			\label{eq:f1maxeq}
			f_1(\mu+d) = \max \{ (\mu+d)_i\,:\, i\in B \}\,.
			\end{align}
			To show this, we first prove that 
			\begin{align}
			\label{eq:MMLBRev:1:ub}
			V(h,\mu+d) \le f_1(\mu+d)\, \quad \forall h\in\tH_1\,.
			\end{align}
			The proof uses induction based on the length of histories in $\tH_1$.
			
			There is only one history of length 1 (base case): $h = (1)$. 
			By the definition of $f_1$, $V(h,\mu+d) = f_1(\mu+d)$.
			Now, assume that the statement holds for all histories up to length $c\ge 1$.
			Take any $h\in \tH_1$ of length $c+1$. Let $h'\in \tH_1$ be the unique immediate predecessor of $h$: 
			$h\in \Hsucc(h')$. This is well-defined thanks to the definition of $H$ and the construction of $\tH_1$.
			If $p(h')=-1$ then, by the definition of $\tH_1$, $V(h,\mu+d) = V(h',\mu+d)$. By the induction hypothesis, 
			$V(h',\mu+d)\le f_1(\mu+d)$, implying $V(h,\mu+d)\le f_1(\mu+d)$.
			On the other hand, if $p(h') = +1$ then $f_1(\mu+d)\ge 
			V(h', \mu+d) = \max\{V(\tilde{h},\mu+d),\, \tilde{h}\in \Hsucc(h')\} \ge V(h,\mu+d)$, finishing the induction.
			Hence, we have proven \eqref{eq:MMLBRev:1:ub}. 
			
			Now, we claim that there exists $h^*\in \tH_1 \cap H_{\max}$ 
			such that $V(h^*,\mu+d) = f_1(\mu+d)$.
			This, together with \eqref{eq:MMLBRev:1:ub} implies \eqref{eq:f1maxeq}.
			
			We construct $h^*$ in a sequential process. 
			For this, we will choose a sequence of moves $m_1,\dots,m_k$ 
			such that $(m_1,\dots,m_i) \in H_{\max}\cap \tH_1$ and
			$V( (m_1,\dots,m_i), \mu+d ) = f_1(\mu+d)$ for any $1\le i \le k$.
			In a nutshell, this sequence is an ``optimal sequence of moves'' that starts with move 1, 
			which is also known as a principal variation for the game under move 1.
			In details, the construction is as follows:
			To start, we choose $m_1=1$.
			Then $V( (m_1),\mu+d) = f_1(\mu+d)$, by the definition of $f_1$.
			Assume that for some $i\ge 1$, we already chose $(m_1,\dots,m_i)$ so that 
			$V( (m_1,\dots,m_i), \mu+d ) = f_1(\mu+d)$  holds.
			If $h\defeq (m_1,\dots,m_i)\in H_{\max}$, we let $k=i$ and we are done. 
			Otherwise, let $m_{i+1} = m(h, \mu+d)$ (this is the ``optimal move'' at $h$ under valuation $\mu+d$).
			Thus, $V( \join(h,m_{i+1}), \mu+d ) = V(h,\mu+d) = f_1(\mu)$. Further, by the construction of $\tH_1$,
			$\join(h,m_{i+1})\in \tH_1$. Since all histories in $H$ are bounded in length, the process ends after some
			$k$ moves for some finite $k$, at which point we are done proving our statement.
			
			To recap, so far we have proved \eqref{eq:f1maxeq}. An entirely analogous proof (left to the reader) 
			shows that also $f_j(\mu+d) = \min\{ (\mu+d)_i\,:\, i\in B'\}$.
			
			We now prove that $f_1(\mu+d) = f_j(\mu+d)$, finishing the proof of \eqref{lem:MMLBRev:1}.
			Assume to the contrary that $f_1(\mu+d)<f_j(\mu+d)$.
			Consider the map $g: \alpha \mapsto f_1(\mu+\alpha d) - f_j(\mu+\alpha d)$ on the interval $\alpha \in [0,1]$.
			Note that $g$ is continuous, $g(0)>0>g(1)$. Hence, by the intermediate value theorem, there exists $\alpha\in (0,1)$ such that $g(\alpha)=0$. Note that $f_1(\mu+\alpha d) = f_j(\mu+\alpha d)$. Hence, $\alpha d \in D_\mu$.
			Since $\alpha |d| < |d|$, $d\in D_{\mu}^{\min}$ cannot hold, a contradiction. Hence, $f_1(\mu+d) = f_j(\mu+d)$.

			Let us now turn to the proof of \eqref{lem:MMLBRev:2}.
			We prove that $d_i \le 0$ holds for all $i\in B\setminus B'$. (The statement concerning elements of $B'\setminus B$
			follows similarly, the details are left to the reader.)
			For the proof, assume to the contrary of the desired statement 
			that there exists some $i\in B\setminus B'$ such that $d_i > 0$. 
			Let $d'\in \R^L$ be such that $d'_k = d_k$ for $j\neq i$, and $d'_i = 0$. 
			Thus, $d'<d$. 
			By \cref{cor:mon}, 
			$f_1(\mu+d')\le f_1(\mu+d)\le  f_j(\mu+d) 
			= \min\{(\mu+d)_k\,:\, k\in B'\} 
			= \min\{(\mu+d')_k\,:\, k\in B'\} \le f_j(\mu+d')$, 
			where the last equality is due to $i\not\in B'$ (hence, $(\mu+d)|_{B'} = (\mu+d')|_{B'}$)
			while the last inequality follows from \cref{lem:helper1forProp4}.
			This implies that $d'\in D_{\mu}$. This together with $|d'|<|d|$ contradicts $d\in D_{\mu}^{\min}$.
			Thus, \eqref{lem:MMLBRev:2} holds.
			
			It remains to prove \eqref{lem:MMLBRev:3}. For this pick $i\in B$. 
			Since $f_1(\mu+d) = f_j(\mu+d)$ has already been established, 
			it suffices to show that  either $(\mu+d)_i = f_1(\mu+d)$ or $d_i = 0$. 
			(The case when $i\in B'$ is symmetric and is left to the reader.)
			If $i\in B\cap B'$ then by \eqref{lem:MMLBRev:1}, $(\mu+d)_i \le \max_{k\in B}  (\mu+d)_k=
			f_1(\mu+d) = f_j(\mu+d) = \min_{k\in B'} (\mu+d)_k \le (\mu+d)_i$, showing that 
			$(\mu+d)_i = f_1(\mu+d) = f_j(\mu+d)$.
			Hence, assume that $i\not\in B\cap B'$. 
			If $d_i=0$ or $(\mu+d)_i = f_1(\mu+d)$ then we are done.
			Otherwise, by \eqref{lem:MMLBRev:2}, $d_i< 0$ and by
			\eqref{lem:MMLBRev:1}, $(\mu+d)_i<\max_{k\in B} (\mu+d)_k = f_1(\mu+d)$.
			Let $\epsilon = f_1(\mu+d) - (\mu+d)_i$. Note that $\epsilon>0$.
			Define $d'\in \R^L$ so that $d'_k = d_k$ if $k\ne i$ and let $d'_i = -(d_i+\epsilon)_{-}$. That is, $d_i$ is shifted up towards
			zero by a positive amount so that it never crosses zero.
			Then, $|d'| < |d|$. Note also that $\mu_i + d_i' = \mu_i + \min(d_i+\epsilon,0)
			\le \mu_i + d_i + \epsilon = f_1(\mu+d) = \max_{k\in B} (\mu+d)_k$. Hence, $\max_{k\in B} (\mu+d')_k = \max_{k\in B} (\mu+d)_k = f_1(\mu+d)$ and thus by \cref{lem:helper1forProp4},
			$f_1(\mu+d') 
			\le \max_{k\in B} (\mu + d)_k' 
			= f_1(\mu+d)$. By \eqref{lem:MMLBRev:1}, $f_1(\mu+d) = f_j(\mu+d) = \min_{k\in B'} (\mu+d)_k$.
			By the definition of $d'$ (thanks to $i\not\in B'$) and \cref{lem:helper1forProp4},
			$\min_{k\in B'} (\mu+d)_k = \min_{k\in B'} (\mu+d')_k \le f_j(\mu+d')$.
			Putting together the inequalities, we get $f_1(\mu+d')\le f_j(\mu+d')$. Hence, $d'\in D_\mu$.
			However, this and $|d'|<|d|$ contradict $d\in D_\mu^{\min}$, finishing the proof of \eqref{lem:MMLBRev:3}.
		\end{proof}
		\newcommand{\cI}{\mathcal{I}}
		\newcommand{\rel}[2]{\stackrel{\text{\footnotesize (#2)}}{#1}}
		\begin{lemma}
			\label{lem:decreasinggraph}
			Given any $\mu\in \R^L$ and any $\theta \in \R$, define $\mu'$ as follows:
			\begin{align*}
			\mu_i' = 
			\begin{cases}
			\theta, \quad i\in \cI;\\
			\mu_i, \quad \text{otherwise,}
			\end{cases}
			\end{align*}
			where $\cI\subset\{i\,:\, \mu_i \ge \theta\}$.
			Then, $f_j(\mu') \ge \min\{\,\theta, f_j(\mu)\,\}$ for any $j\in [K]$.
		\end{lemma}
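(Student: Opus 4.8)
The plan is to strengthen the statement and prove, by induction on the distance of a history from the terminating states (exactly as in the proof of \cref{lem:monotonicpartialorder}), that $V(h,\mu') \ge \min\{\theta, V(h,\mu)\}$ for \emph{every} $h\in H$. Specializing this to $h=(j)$ then gives $f_j(\mu') = V((j),\mu') \ge \min\{\theta, V((j),\mu)\} = \min\{\theta, f_j(\mu)\}$, which is the claim.

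For the base case I would take $h\in H_{\max}$, so that $V(h,\mu')=\mu'_{\tau(h)}$ and $V(h,\mu)=\mu_{\tau(h)}$. If $\tau(h)\in\cI$ then $V(h,\mu')=\theta\ge\min\{\theta,V(h,\mu)\}$ (indeed the two sides coincide, since $\cI\subset\{i:\mu_i\ge\theta\}$ forces $\mu_{\tau(h)}\ge\theta$); if $\tau(h)\notin\cI$ then $V(h,\mu')=\mu_{\tau(h)}=V(h,\mu)\ge\min\{\theta,V(h,\mu)\}$.

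For the inductive step I would take a nonmaximal $h$ (so that $\Hsucc(h)\neq\emptyset$) and assume the inequality holds for all of its immediate successors. In the maximizing case $p(h)=+1$, pick a successor $h^\dagger\in\Hsucc(h)$ with $V(h^\dagger,\mu)=V(h,\mu)$; since $h^\dagger$ competes in the maximum defining $V(h,\mu')$, the induction hypothesis gives $V(h,\mu')\ge V(h^\dagger,\mu')\ge\min\{\theta,V(h^\dagger,\mu)\}=\min\{\theta,V(h,\mu)\}$. In the minimizing case $p(h)=-1$, pick instead $h^\dagger\in\Hsucc(h)$ with $V(h^\dagger,\mu')=V(h,\mu')$; then, using the induction hypothesis, the fact that $V(h,\mu)=\min_{h'\in\Hsucc(h)}V(h',\mu)\le V(h^\dagger,\mu)$, and the monotonicity of $x\mapsto\min\{\theta,x\}$, I get $V(h,\mu')=V(h^\dagger,\mu')\ge\min\{\theta,V(h^\dagger,\mu)\}\ge\min\{\theta,V(h,\mu)\}$. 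This closes the induction.

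The only place to be careful — and the sole ``obstacle'' worth flagging — is that the optimal successor must be chosen with respect to $\mu$ in the maximizing case but with respect to $\mu'$ in the minimizing case; swapping them reverses one of the inequalities. Apart from that the argument is entirely routine, and it does not even invoke the monotonicity of $f$ (\cref{cor:mon}), only the elementary monotonicity of the truncation $x\mapsto\min\{\theta,x\}$.
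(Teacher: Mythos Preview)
Your proof is correct and follows essentially the same approach as the paper: both strengthen the claim to $V(h,\mu')\ge\min\{\theta,V(h,\mu)\}$ for all $h\in H$ and prove it by induction on the distance from the terminal histories, with the same base case and the same split into the $p(h)=+1$ and $p(h)=-1$ cases. The only cosmetic difference is that in the minimizing case the paper bounds every successor via the induction hypothesis and then takes the minimum, whereas you pick the $\mu'$-minimizer $h^\dagger$ first and invoke the monotonicity of $x\mapsto\min\{\theta,x\}$; the two presentations are equivalent.
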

		\begin{proof}
			Fix $j\in [K]$.
			We prove $V(h, \mu') \ge \min\{\theta,\, V(h, \mu)\}$ for $h\in H$ by induction based on how close a history $h$ is to being a maximal history. Note that this suffices to prove the statement thanks to 
			$f_j(\mu') = V((j), \mu') \ge \min\{\theta,\, V((j), \mu)\} = \min\{ \theta,\, f_j(\mu)\}$.
			
			Define function $c$ so that $c(h)=0$ if $h\in H_{\max}$, and $c(h) = 1 + \max\{c(h')\,:\, h'\in \Hsucc(h) \}$ otherwise.\\
			{\it Base case:} If $h\in H_{\max}$, then $V(h, \mu') = \mu_i' \in \{\mu_i,\theta\} \ge \min\{\theta, \mu_i \} = \min\{\theta, V(h,\mu) \}$ for some $i\in [L]$.\\
			{\it Induction step:} Assume that for any $h\in H$ such that $ c(h)\le c$, $V(h, \mu') \ge \min\{\theta,\, V(h, \mu)\}$.
			Given $h$ such that $c(h)=c+1$, if $p(h)=1$, 
			\begin{align*}
			V(h, \mu') & = \max\{V(h', \mu')\,:\, h'\in \Hsucc(h) \}\\
			& \ge \max\{ \min\{\theta, V(h', \mu)\} \,:\, h'\in \Hsucc(h)\} \quad \text{(by induction)}\\
			&  \rel{\ge}{a}  \min\{ \theta,\, V(h'^*, \mu)\}\\
			& = \min\{ \theta,\, V(h,\mu)\},
			\end{align*}
			where in (a), $h'^*$ is the optimal $h'$ such that $ V(h'^*, \mu) = V(h,\mu)$. If $p(h)=-1$, 
			\begin{align*}
			V(h, \mu') & = \min\{V(h', \mu')\,:\, h'\in \Hsucc(h) \}\\
			& \ge \min\{ \min \{\theta,\, V(h', \mu) \}\,:\, h'\in \Hsucc(h)\}\\
			& \rel{\ge}{b} \min\{ \theta,\, \min\{ V(h', \mu) \,:\, h'\in \Hsucc(h)\}\}\\
			& \ge \min\{\theta,  V(h, \mu)\}.
			\end{align*}
			Here (b) holds because for any $h'\in \Hsucc(h)$, $V(h', \mu) \ge\min\{ V(h', \mu) \,:\, h'\in \Hsucc(h)\} $, thus $\min \{\theta\,,\, V(h', \mu) \} \ge \min\{ \theta,\, \min\{ V(h', \mu) \,:\, h'\in \Hsucc(h)\}\}$.
		\end{proof}
		With this, we are ready to prove \cref{prop:MiniSigDepart}, which we repeat here for the reader's convenience:
		\propMiniSigDepart*
		\begin{proof}
			First we prove $D_\mu^{\min} \subset S$. 
			For this take any $d\in D_\mu^{\min}$.
			Since $d\in D_\mu$, 
			by \cref{lem:MMLBRev}, for some $j>1$,
			$f_1(\mu+d) = f_j(\mu+d)$.
			WLOG assume $j=2$.
			We will  prove that:
			\begin{align}
			\label{eq:lemma5}
			\exists B\in\cB_1^+,\, B'\in \cB_2^-\mbox{ s.t. } \forall i\in (B\cup B')^c,\, d_i = 0.
			\end{align}
			By \cref{lem:MMLBRev},
			there exist $B\in \cB_1^+$ and $B' \in \cB_2^-$ such that 
			\[\max\{(\mu+d)_i\,:\, i\in B\} = f_1(\mu+d) = f_2(\mu+d) = \min\{(\mu+d)_i\,:\, i\in B'\};
			\]
			Take these sets and pick some $i\in (B\cup B')^c$. If $d_i=0$, we are done. 
			Otherwise, let $d'_k = d_k$ for all $k\neq i$ and let $d'_i=0$. 
			Then, $|d'|<|d|$.
			By \cref{lem:helper1forProp4} and \cref{lem:MMLBRev} \eqref{lem:MMLBRev:1}, 
			\begin{align*}
			f_1(\mu+d') & \le \max\{(\mu+d')_j\,:\, j\in B \} = \max\{(\mu+d)_j\,:\, j\in B \} \\
			& = f_1(\mu+d) \\
			& \le f_2(\mu+d) \\
			& = \min\{(\mu+d)_j,\, j\in B' \} = \min\{(\mu+d')_j\,:\, j\in B' \}\\
			& \le f_2(\mu+d').
			\end{align*}
			Thus $d'\in D_{\mu}$, which contradicts that $d\in D_{\mu}^{\min}$, establishing \eqref{eq:lemma5}. Also we have $ f_1(\mu+d) = f_2(\mu+d)$.
			
			Let $\theta = f_1(\mu+d) = f_2(\mu+d)$. For $i\in B\setminus B'$, by \cref{lem:MMLBRev} \eqref{lem:MMLBRev:2} and \eqref{lem:MMLBRev:3}, $d_i = -(\mu_i-\theta)_+$. 
			Similarly, $d_i = (\mu_i-\theta)_-$ for $i\in B'\setminus B$. 
			Note that for $i\in B\cap B'$, 
			\begin{align*}
			(\mu+d)_i & \le \max\{(\mu+d)_i\,:\, i\in B\cap B'\}\\
			& \le\max\{(\mu+d)_i\,:\, i\in B\}  \\
			& = f_1(\mu+d) = \theta = f_2(\mu+d)\\
			& = \min\{(\mu+d)_i\,:\, i\in B'\} \\
			& \le \min\{(\mu+d)_i\,:\, i\in B'\cap B\}\\
			& \le (\mu+d)_i\,.
			\end{align*}
			Thus, $(\mu+d)_i = \theta$, and therefore $d_i = \theta - \mu_i$.
			It remains to prove $\theta \in [f_2(\mu),f_1(\mu)]$.
			We prove this by contradiction. 
			Assume that $\theta < f_2(\mu)$. Define $d'$ as follows:
			\[
			d_i' = 
			\begin{cases}
			-(\mu_i - f_2(\mu))_+\,, & \text{if } i\in B\,;\\
			0\,, & \text{otherwise.}
			\end{cases}
			\]
			We will prove the following claims: 
			\begin{enumerate}[(i)]
				\item \label{propMiniSigDepart:c1} $|d'|< |d|$;
				\item \label{propMiniSigDepart:c2} $f_1(\mu+d') \le f_2(\mu)$;
				\item \label{propMiniSigDepart:c3} $f_2(\mu+d') \ge f_2(\mu)$.
			\end{enumerate}
			Altogether these contradict $d\in D_{\mu}^{\min}$. 
			
			To show \eqref{propMiniSigDepart:c1},
			note that for $i\in B^c$ or $i\in B$ such that $\mu_i\le f_2(\mu)$, $|d_i'|=0\le |d_i|$. 
			Assume $i\in B$ such that $\mu_i > f_2(\mu)>\theta$. 
			Then $0> d_i' = f_2(\mu) - \mu_i > \theta - \mu_i = - (\mu_i - \theta)_+ = d_i$, thus $|d_i'| < |d_i|$. 
			Therefore $|d'|< |d|$, proving \eqref{propMiniSigDepart:c1}. 
			
			For  \eqref{propMiniSigDepart:c2}, note that for $i\in B$, $\mu_i+d_i' = \mu_i - (\mu_i - f_2(\mu))_+ \le f_2(\mu)$,
			thus $\max_{i\in B} \mu_i + d_i' \le f_2(\mu)$.
			By \cref{lem:helper1forProp4}, we also have $f_1(\mu+d') \le \max_{\in B} \mu_+d'_i$, which together with the previous inequality implies \eqref{propMiniSigDepart:c2}.
			
			Lastly, for proving \eqref{propMiniSigDepart:c3} 
			define $\cI = \{i\in B\,:\, \mu_i\ge f_2(\mu)\}$. Then $\mu' := \mu+d'$ can be rewritten as 
			\begin{align*}
			\mu_i' = 
			\begin{cases}
			f_2(\mu)\,, & \text{if } i \in \cI\,;\\
			\mu_i\,, & \text{otherwise.}
			\end{cases}
			\end{align*}
			By \cref{lem:decreasinggraph}, $f_2(\mu+d') \ge f_2(\mu)$, showing \eqref{propMiniSigDepart:c3} . 
			
			The inequality $\theta \le f_1(\mu)$ can also be proved using analogous ideas. 
			Therefore, $\theta \in [f_2(\mu), f_1(\mu)]$. 
			Combining all the previous statements leads to the conclusion $D_{\mu}^{\min}\subset S$.
			
			Let us now prove that $S\subset D_\mu$.
			Take any element $\Delta \in S$.
			Let $j\in [K]$, $B\in \cB_1^+$ and $B'\in \cB_j^-$ as in the definition of $S$.
			WLOG assume that $j=2$.
			Let $\mu' = \mu+\Delta$.
			It suffices to show that $f_1(\mu')\le \theta$ and $f_2(\mu')\ge \theta$.
			We show $f_1(\mu')\le \theta$, leaving the proof of the other relationship to the reader (the proof is entirely analogous
			to the one presented).
			By \cref{lem:helper1forProp4}, it suffices to show that $\max\{ \mu_i'\,:\, i\in B \} \le \theta$.
			When $i\in B \setminus B'$, $\Delta_i = -(\mu_i - \theta)_+$. 
			Thus, $\mu_i' = \mu_i - \max( \mu_i-\theta,0) = \mu_i +\min(\theta-\mu_i,0) \le \mu_i + \theta-\mu_i \le \theta$.
			If $i \in B \cap B'$, $\mu_i' = \mu_i + (\theta-\mu_i) = \theta$, thus finishing the proof.
		\end{proof}
		
		\section{Proofs for \cref{sec:ub}}
		We start with the correctness result:
		\propcorrectness*
		\begin{proof}
			Assume to the contrary that $J\ne j^*(\mu)$.
			WLOG let $j^*(\mu)=1$.
			By \cref{ass:ass1}\eqref{ass:ass1:mon}
			the definition of $\xi$ and that of $J$, $C_T$, the stopping rule,
			$f_J(\mu) \ge f_J(L_T^\delta) \ge f_{\toptwo_T}(U_T^\delta) \ge f_1(U_T^\delta) \ge f_1(\mu)$.
			This contradicts~\cref{ass:uniqueness}.
		\end{proof}
		
		For proving the sample complexity bound, we consider the following result:
		\lemlengthofInterval*
		\begin{proof}
			We first prove that
			$c \in \cI \defeq \cup_{j\in \{\topone_t,\toptwo_t\}} [f_j(L^\delta_t), f_j(U^\delta_t)]$.
			For this, it suffices to show that
			it does not hold that $c\in \cI^c$ where $\cI^c = \R\setminus \cI$ is the complementer of $\cI$.
			Now, $c\in \cI^c$ holds iff at least one of the four conditions hold:
			\emph{(i)} $f_{\topone_t}(L^\delta_t)>c$ and $f_{\toptwo_t}(L^\delta_t)>c$;
			\emph{(ii)} $f_{\topone_t}(U^\delta_t)<c$ and $f_{\toptwo_t}(U^\delta_t)<c$;
			\emph{(iii)} $f_{\topone_t}(U^\delta_t)<c$ and $f_{\toptwo_t}(L^\delta_t)>c$;
			\emph{(iv)} $f_{\topone_t}(L^\delta_t)>c$ and $f_{\toptwo_t}(U^\delta_t)<c$.
			Consider the following:
			\begin{enumerate}[\mbox{Case} (i)]\itemsep0em
				\item implies that $f_{\topone_t}(\mu) \ge f_{\topone_t}(L^\delta_t)>c$ and similarly $f_{\toptwo_t}(\mu)>c$. Thus there are two arms with payoff greater than $c$, which contradicts \cref{ass:uniqueness}.
				\item implies that no arm has payoff above $c$, which contradicts the definition of $c$.
				\item Then $f_{\toptwo_t}(L^\delta_t)>c>f_{\topone_t}(U^\delta_t)\ge f_{\topone_t}(L^\delta_t)$,
				which contradicts the definition of $\topone_t$.
				\item If this is true, then by definition the algorithm has stopped, hence $t\not<T$.
			\end{enumerate}
			Thus, we see that $c\in \cI^c$ cannot hold and hence $c\in [f_J(L^\delta_t), f_J(U^\delta_t)]$ for either $J=\topone_t$ or $J = \toptwo_t$,
			proving the first part.
			Next, note that for any $j\in [L]$, $|c - f_j(\mu)| \ge \frac{\Delta}{2}$.
			Hence, also $|c-f_J(\mu)|\ge \frac{\Delta}{2}$.
			Also note that $f_J(\mu) \in [f_J(L^\delta_t), f_J(U^\delta_t)]$. Thus,
			$f_J(U^\delta_t) - f_J(L^\delta_t) \ge |c - f_J(\mu)| \ge \frac{\Delta}{2}$.
		\end{proof}
		We can now prove \cref{thm:upperbound}:
		\thmupperbound*
		\begin{proof}
			Let $\tau$ be a fixed deterministic integer.
			Now, on $\xi$,			
			\begin{align*}
			& \quad \min(T, \tau) \le 1+ \sum_{t=1}^{\tau} \one{t < T} \\
			& \rel{\le}{a} 1+
			\sum_{t=1}^{\tau} \one{\exists J\in\{ \topone_t, \toptwo_t \} \text{ s.t. } c \in [f_J(L^\delta_t), f_J(U^\delta_t)] \text{ and } f_J(U^\delta_t) - f_J(L^\delta_t) \ge \Delta/2} \\
			& \rel{\le}{b} 1+
			\sum_{t=1}^{\tau} \one{\exists I\in\{I_t, J_t\} \text{ s.t. } c \in [L^\delta_t(I), U^\delta_t(I)] \text{ and } U^\delta_t(I) - L^\delta_t(I) \ge \Delta/2} \\
			& \le 1+ \sum_{t=1}^{\tau} \sum_{i \in [L]} \one{i\in\{I_t, J_t\}}\one{c \in [L^\delta_t(i), U^\delta_t(i)] \text{ and } U^\delta_t(i) - L^\delta_t(i) \ge \Delta/2} \\
			& \rel{\le}{c} 1+
			\sum_{t=1}^{\tau} \sum_{i \in [L]} \one{i\in\{I_t, J_t\}}\one{N_t(i)\le 8\beta(N_t(i),\delta/(2L)) \left(\frac{1}{(c - \mu_i)^2} \wedge \frac{1}{(\Delta/2)^2}\right) }\\
			& \rel{\le}{d} 1+
			\sum_{i \in [L]}  \sum_{t=1}^{\tau} \one{i\in\{I_t, J_t\}}\one{N_t(i)\le 8\beta(\tau,\delta/(2L)) \left(\frac{1}{(c - \mu_i)^2} \wedge \frac{1}{(\Delta/2)^2}\right) }\\
			& \le 1+ \sum_{i \in [L]} 8\beta(\tau,\delta/(2L)) \left(\frac{1}{(c - \mu_i)^2} \wedge \frac{1}{(\Delta/2)^2}\right)\\
			& = 1+ 8 H(\mu)\beta(\tau,\delta/(2L))\,.
			\end{align*}
			Here,
			(a) holds by the first part of \cref{lem:lengthofInterval},
			(b) holds by \cref{ass:ass1}\eqref{ass:ass1:cov},
			(c) holds by the definition of $\beta$,
			(d) holds because $\beta(\cdot,\delta/(2L))$ is increasing.
			Picking any $\tau$ such that $8H(\mu)\beta(\tau,\delta/(2L)) \le \tau-1$, we have
			$\min(T,\tau) \le \tau$, showing that $T \le \min(T,\tau) \le \tau$.
		\end{proof}
		\section{Proofs for \cref{sec:minimaxub}}
		\lemmonotonicpartialorder*
		\begin{proof}
			We prove the result by induction based on how close a history $h$ is to being a maximal history.
			As in an earlier proof, for $h\in H$, we let $c(h) = 0$ if $h\in H_{\max}$ and otherwise we let
			$c(h) = 1+\max\{ c(h')\,: h' \in \Hsucc(h) \}$,
			where
			recall that $\Hsucc(h)$ denotes the set of immediate successors of $h\in H$ in $H$.

			{\it Base case: } If $c(h)=0$ (i.e., $h\in H_{\max}$), then $V(h,u) = u_{\tau(h)} \le v_{\tau(h)} = V(h,v)$.
			
			{\it Induction step: } Assuming that for all the $h' \in H$ with $c(h)\le c$ with some $c\ge 0$
			it holds that $V(h',u) \le V(h',v)$.
			Take $h\in H$ such that $c(h)=c+1$. WLOG assume that $p(h)=1$.
			We have:
			\begin{align*}
			V(h,u) & = V(\join(h,m(h,u)),u)
			\le V(\join(h,m(h,u)),v)
			\le V(\join(h,m(h,v)),v) = V(h,v)\,,
			\end{align*}
			where the first and the last equalities are by definition, the first inequality is by the induction hypothesis, and the second inequality is due to the definition of $m(h,v)$.
		\end{proof}
		
		\lemSubsetInterval*
		\begin{proof}
			Fix  $0\le k<\ell$ and $u\le v$.
			WLOG assume that $p(k)=1$.
			By the definition of $V(h,\mu)$ and $m(h,\mu)$,
			\begin{align*}
			V(h,v)
			= \max\{ V( h', v) \,:\, h'\in \Hsucc(h) \}
			= V(\join(h,m(h,v)),v) \,.
			\end{align*}
			Hence, by the definition of $\MinMax$ and the above identity,
			$V(h_k, v) = V(h_{k+1}, v)$. 
			Further, $V(h_k, u)
			= \max\{ V( h', u) \,:\, h'\in \Hsucc(h) \}
			\ge V(h_{k+1}, u)$. Thus,
			\[V(h_k, v) \le V(h_{k+1}, v) \mbox{ and }  V(h_k, u) \ge V(h_{k+1}, u)\,,\]
			finishing the proof.
		\end{proof}
		
		\thmMTupperbound*
		\begin{proof}
			Recall that $I_t =\tau(\MinMax(\topone_t,L_t^\delta,U_t^\delta))$ and $J_t= \tau(\MinMax(\toptwo_t,L_t^\delta,U_t^\delta))\}$.
			Assume that $\xi$ holds.
			We prove that
			$\Vset(I_t,\mu) \subset [L_t^\delta(I_t), U_t^\delta(I_t)]$
			and
			$\Vset(J_t,\mu) \subset [L_t^\delta(J_t), U_t^\delta(J_t)]$ hold.
			The rest of the proof is similar to that of \cref{thm:upperbound}.
			
			Consider $I_t$. The proof for $J_t$ works the same way and is hence omitted.
			If there is multiple path $h\in H$ such that $\tau(h) = I_t$, then $\Vset(I_t, \mu) = \emptyset \subset [L_t^\delta(I_t), U_t^\delta(I_t)]$.
			Otherwise, let $h\in H$ be the unique path. Since $I_t$ is pulled, $h = \MinMax(m)$ for some $m\in M$.
			Note that \cref{lem:SubsetInterval} implies that
			$ [V(h_k, L_t^\delta) ,V(h_k, U_t^\delta)] \subset [V(h_{k+1}, L_t^\delta) ,V(h_{k+1}, U_t^\delta)]$.
			Thus it is sufficient to prove that for $1\le k<\ell$, $V(h_k, \mu) \in [V(h_{k}, L_t^\delta) ,V(h_{k}, U_t^\delta)]$.
			However, this follows by \cref{lem:monotonicpartialorder} and because
			on the event $\xi$, $ L_t^\delta \le \mu \le U_t^\delta$ holds.
			
			Now let $S(i)  =\Vset(i, \mu)\cup\{c, \mu_i\}$.
			Fix $t<T$.
			By the above result and by \cref{lem:lengthofInterval}, for one of $J = \topone_t$ or $J = \toptwo_t$,
			if $I = \MinMax(J,L_t^\delta,U_t^\delta)$
			then $S(I)\subset [L_t^\delta(I), U_t^\delta(I)]$,
			which implies that $U_t^\delta(I) - L_t^\delta(I) \ge \Span(S(I))$. Therefore, 
			\begin{align*}
			\min(T, \tau) & \le 1+  \sum_{t=1}^{\tau} \one{t < T} \\
			& \le 1+ \sum_{t=1}^{\tau} \one{\exists I\in\{I_t, J_t\} \text{ s.t. } U_t^\delta(I) - L_t^\delta(I) \ge \Span(S(I))} \\
			& \le 1+ \sum_{t=1}^{\tau} \sum_{i \in [L]} \one{i\in\{I_t, J_t\}}\one{N_t(i)\le \frac{8\beta(N_t(i),\delta/(2L))}{\Span(S(i))^2} }\\
			& \le 1+ \sum_{i \in [L]}  \sum_{t=1}^{\tau}\one{i\in\{I_t, J_t\}}\one{N_t(i)\le \frac{8\beta(\tau,\delta/(2L))}{\Span(S(i))^2} }\\
			& \le 1+ \sum_{i \in [L]} \frac{8\beta(\tau,\delta/(2L))}{\Span(S(i))^2} = 1+ 8H(\mu)\beta(\tau,\delta/(2L))
			\end{align*}
		\end{proof}
	\end{appendices}

	
	\vskip 0.2in
	\bibliography{main}
\end{document}